\let\NAT@parse\undefined
\newcommand*\linkcolours{ForestGreen}
\newcolumntype{Y}{>{\centering\arraybackslash}X}
\newcommand{\removelatexerror}{\let\@latex@error\@gobble}
\def\@endtheorem{\endtrivlist}
\newtheorem{theorem}{Theorem}
\newtheorem{definition}{Definition}
\newtheorem{proposition}{Proposition}
\newtheorem{problem}{Problem}
\newcommand{\ou}{%
  \mathrel{%
    \vcenter{\offinterlineskip
      \ialign{##\cr$<$\cr\noalign{\kern-1.5pt}$>$\cr}%
    }%
  }%
}%
\begin{document}

\title{\LARGE \bf
Certified Vision-based State Estimation for Autonomous Landing Systems using Reachability Analysis
}

\author{Ulices Santa Cruz$^{1}$ and Yasser Shoukry$^{1}$
\thanks{
This work was partially sponsored by the NSF awards \#CNS-2002405, \#CNS-2013824, and \#CNS-2313104.
}
\thanks{$^{1}$Ulices Santa Cruz, and  Yasser Shoukry are with the Department of Electrical Engineering and Computer Science, University of California Irvine, Email: \{usantacr,yshoukry\}@uci.edu}%
}

\maketitle

\begin{abstract}

This paper studies the problem of designing a certified vision-based state estimator for autonomous landing systems. In such a system, a neural network (NN) processes images from a camera to estimate the aircraft's relative position with respect to the runway. We propose an algorithm to design such NNs with certified properties in terms of their ability to detect runways and provide accurate state estimation. At the heart of our approach is the use of geometric models of perspective cameras to obtain a mathematical model that captures the relation between the aircraft states and the inputs. We show that such geometric models enjoy mixed monotonicity properties that can be used to design state estimators with certifiable error bounds. We show the effectiveness of the proposed approach using an experimental testbed on data collected from event-based cameras. 

\end{abstract}



\section{INTRODUCTION}
Machine learning models, like deep neural networks, are increasingly used to control dynamical systems in safety-critical applications. These black-box models trained using data are used heavily to process high-dimensional imaging data like LiDAR scanners and cameras to produce state estimates to low-level, model-based controllers. While these deep Neural Networks (NNs) provide empirically accepted results, they lack certified guarantees in terms of their ability to process complex scenes and provide estimates of the location of different objects within the scene. It is then unsurprising the increasing number of reported failures of these deep NNs in building reliable autonomous systems.

In this paper, we will consider the safety of deep neural networks that control aircraft while approaching runways to perform
an autonomous landing. Such a problem enjoys geometric nature that can be exploited to develop a geometrical/physical model of the perception system. Yet, it represents a significant real-world problem of interest to the designers of the autonomous system. In particular, we present a novel neural network-based filter that can process complex scenes along with estimates of the state of the aircraft---computed by unverified complex deep neural networks---and output a state estimate of the aircraft with a certified error bound. That is, akin to the ``control shields'' in the reinforcement learning literature~\cite{ferlez2020shieldnn,alshiekh2018safe}, 
the proposed filter can be thought of as a ``shield'' that can filter out incorrect estimates of the aircraft and replaces them with ones with certified error bounds. In contrast, the correct estimates pass this filter (or shield) unaltered.

\begin{figure}[t!]
\centering
\includegraphics[width=1.0\columnwidth]{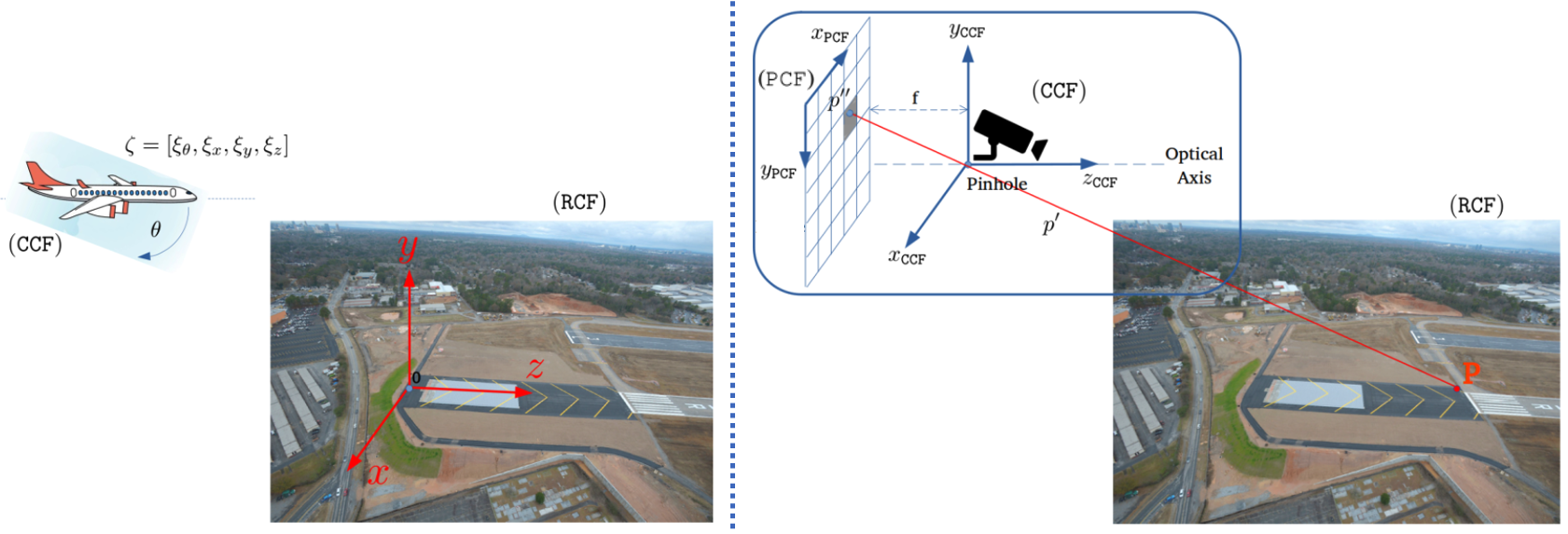}
\caption{Coordinate frames: Runway ($\texttt{RCF}$), Camera ($\texttt{CCF}$) and Pixel ($\texttt{PCF}$).}
\label{coordinates}
\vspace{-5mm}
\end{figure}

A central challenge to designing such a filter is the need to explicitly model the imaging process, i.e., the relation between the system state and the images created by the camera~\cite{SantaCruz2022}. An early result on the application of formal verification for vision-based dynamical systems controlled with neural networks \cite{Sun2019} focused only on the usage of LiDARs. The first steps in formally modeling the imaging process for camera-based systems have been recently studied in~\cite{Mitra2022,katz2022verification,habeeb2023verification}. In particular, the work in~\cite{Mitra2022,katz2022verification} proposes the use of abstractions of the perception system as a formal model of perception. Unfortunately, these abstractions are only tested on a set of samples and lack guarantees in their ability to model the perception system formally. The work in~\cite{habeeb2023verification} extends the notion of imaging-adapted partitions, originally defined for LiDAR images~\cite{Sun2019}, to the notion of image-invariant regions, which are regions within which the captured images are identical. Unfortunately, the work in~\cite{habeeb2023verification} focuses only on simple scenes that can be modeled as a collection of triangles that represent the triangulated faces of objects in the environment. The work in~\cite{talak2023certifiable} considers the problem of estimating the pose of different objects in the scene. Given a partial point cloud of an object, the goal is to estimate the object's pose and provide a certificate of correctness for the resulting estimate. While capable of handling complex objects, the framework in~\cite{talak2023certifiable} is sound but not complete, meaning that if it can identify the object's pose, it will generate a certificate. Still, not all poses of the object will be identified, even if the object of interest exists in the scene. Other techniques include classification that uses targeted inputs with the aim of finding counterexamples that violate safety~\cite{shoouri2021falsification}. However, such techniques do not provide formal guarantees regarding the ability to find all counterexamples.

In this paper, we build on our recent results~\cite{SantaCruz2022} that exploit the geometry of the autonomous landing problem to construct a formal model for the image formation process (a map between the aircraft states and the image produced by the camera). This physics-based formal model is designed such that it can be encoded as a neural network (with manually chosen weights) that we refer to as the Runway Generative Model neural network. Thanks to the recent development in computing the reachable sets of neural networks (the set of all possible outputs of the network)~\cite{tran2020nnv,ferlez2022polynomial,tran2019star}, we can characterize the set of all possible images for the runway. We use such reachability analysis to design novel filters that can remove all the other objects in the scene by matching the spatial and geometrical properties of the runway to those in the computed reachable set. Moreover, as a by-product of this design, the proposed filter identifies the set of possible state estimates of the aircraft. This set of possible state estimates can then be used to cross-check the ones computed by unverified neural network estimators and provide certifiable error bounds on the final state estimate.

\section{Preliminaries} 
\label{sec:preliminaries}

\subsection{Notation} 
\label{sub:notation}

We denote by $\mathbb{N}$, $\mathbb{B}$, $\mathbb{R}$ and $\mathbb{R^+}$ the set of natural, Boolean, real, and non-negative real numbers, respectively. 
We use $||x||_{\infty}$ to denote the infinity norm of a vector $x\in\mathbb{R}^n$. We denote by $\mathcal{B}_r(c)$ the infinity norm centered at $c$ with radius $r$, i.e., $\mathcal{B}_r(c) = \{x \in \mathbb{R}^n | || c - x||_{\infty} \le r\}$.  We use the notation $A_{[i,j]}$ to 
denote the element in the $i^\text{th}$ row and $j^\text{th}$ column of $A$. 
Analogously,  the notation $A_{[i,:]}$ denotes the 
$i^\text{th}$ row of $A$, and $A_{[:, j]}$ denotes 
the $j^\text{th}$ column of $A$; when $A$ is a vector instead, both notations 
return a scalar. Let $\mathbf{0}_{n,m}$ be an $(n \times m)$ matrix of zeros, and  
$\mathbf{1}_{n,m}$ be the $(n \times m)$ matrix of ones. 
Finally, the symbols $\oplus$ and $\otimes$ denote element-wise addition and multiplication of matrices.

\subsection{Aircraft State Space}

In this paper, we consider an aircraft landing on a runway. We assume the states of the aircraft to be measured with respect to the origin of the Runway Coordinate Frame (shown in Figure~\ref{coordinates} (left)), where positions are: $\xi_x$ is the axis across runway; $\xi_y$ is the altitude and $\xi_z$ is the axis along the runway. We consider only one angle $\xi_\theta$, representing the pitch rotation around the $x$ axis of the aircraft. The state vector of the aircraft at time $t \in \mathbb{N}$ is denoted by $\xi^{(t)} \in \mathbb{R}^4 = [\xi_\theta^{(t)}, \xi_x^{(t)}, \xi_y^{(t)}, \xi_z^{(t)}]^T$.

\subsection{Runway Parameters}

We consider a runway that consists of two border line segments, $L$ and $R$. Each line segment can be characterized by its start and end point (also measured in the Runway Coordinate Frame) i.e., $L = [(L_x, 0, L_z), (L_x + r_w, 0, L_z + r_{l})]$ and $R = [(R_x, 0, R_z), (R_x + r_w, 0, R_z + r_l)]$ where $r_w$ and $r_l$ refers to the runway width and length (e.g. standard international runways are designed with $r_w = 40$ meters wide and $r_l= 3000$ meters).

\subsection{Camera Model} 
\label{sub:Camera_Model}

We assume the aircraft is equipped with a monochrome camera $\mathcal{C}$ that produces images of $a \times b$ pixels. Since the camera is assumed to be monochromatic, each pixel in the image $I$ takes a value of 0 or 1. The image produced by the camera depends on the relative location of the aircraft with respect to the runway and the other objects in the scene. In other words, we can model the camera $\mathcal{C}$ as a function that maps aircraft states into images, i.e., $\mathcal{C}: \mathbb{R}^4 \rightarrow \mathbb{B}^{a\times b}$. Although the images created by the camera depend on the runway parameters and the other objects in the scene, we drop this dependence from the notation $\mathcal{C}$ for ease of notation.

We utilize an ideal pinhole camera model~\cite{ma2012invitation} to capture the image formation process of this camera. In general, a point $p$ in the Runway Coordinate Frame ($\texttt{RCF}$) is mapped into a point $p'$ on the Camera Coordinate Frame ($\texttt{CCF}$) using a translation and rotation transformations defined by~\cite{MultipleView}:
\begin{equation}
    \begin{bmatrix}
    p'_{x_\texttt{CCF}} \\
    p'_{y_\texttt{CCF}} \\
    p'_{z_\texttt{CCF}} \\
    1
\end{bmatrix}=\begin{bmatrix}
    1 & 0 & 0 & x\\
    0 & \cos{\theta} & \sin{\theta} & y\\
    0 & -\sin{\theta} & \cos{\theta} & z\\
    0 & 0 & 0 & 1
\end{bmatrix}\begin{bmatrix}
    p_x \\
    p_y \\
    p_z \\
    1
\end{bmatrix}
\end{equation}

\begin{figure}[t!]
\centering
\includegraphics[width=1.0\columnwidth]{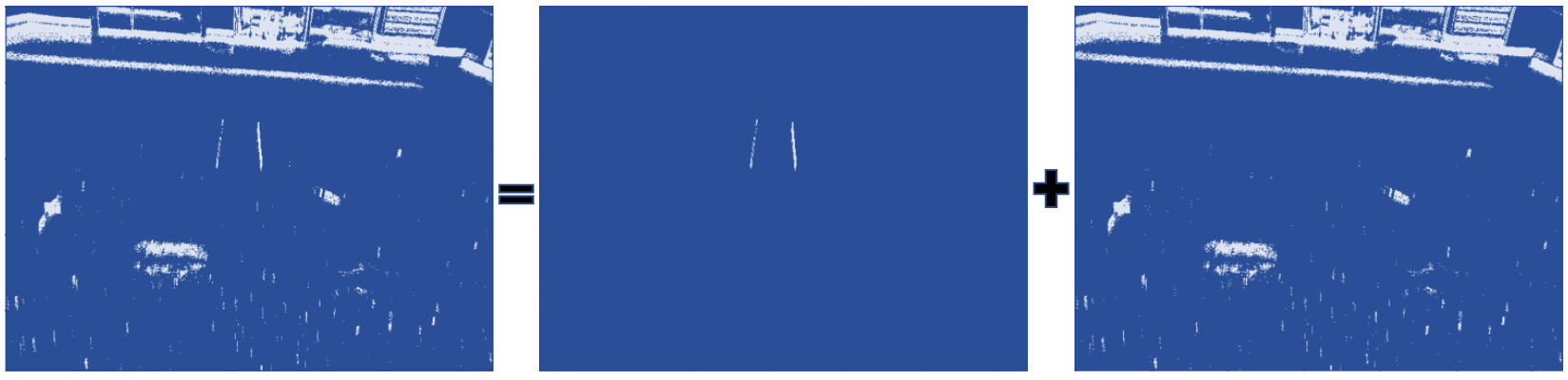}
\caption{Monochromatic images generated using state-of-the-art event-based cameras. The full image $I$ to the left can be decomposed into one that contains only the runway image $I_r$ (center) and the remaining objects/noise $I_n$ (right), i.e., $I = I_r + I_n$.}
\label{fig:image}
\vspace{-5mm}
\end{figure}

The camera then converts the 3-dimensional point $p'$ on the camera coordinate frame into two-dimensional point $p''$ on the Pixel Coordinate Frame ($\texttt{PCF}$) as:
\begin{align}
    p'' = \left(p''_{x_\texttt{PCF}}, p''_{y_\texttt{PCF}} \right) = \left(\left\lfloor \frac{ q_{x_\texttt{PCF}} }{ q_{z_\texttt{PCF}}} \right\rfloor, \left\lfloor\frac{ q_{y_\texttt{PCF}} }{ q_{z_\texttt{PCF}}} \right\rfloor \right)
    \label{eq:pcf_floor}
\end{align} 
where:
\begin{equation}
    \begin{bmatrix}
    q_{x_\texttt{PCF}} \\
    q_{y_\texttt{PCF}} \\
    q_{z_\texttt{PCF}}
\end{bmatrix}=
\begin{bmatrix}
\rho_w & 0 & u_0\\
0 & -\rho_h & v_0\\
0 & 0 & 1
\end{bmatrix}
\begin{bmatrix}
    f & 0 & 0 & 0\\
    0 &  f & 0 & 0\\
    0 & 0 & 1 & 0
\end{bmatrix}
\begin{bmatrix}
    p'_{x_\texttt{CCF}} \\
    p'_{y_\texttt{CCF}} \\
    p'_{z_\texttt{CCF}} \\
    1
\end{bmatrix}
\end{equation}
and $f$ is the focal length of the camera lens, $\textrm{W}$ is the image width (in meters), $\textrm{H}$ is the image height (in meters), $a$ is the image Width (in pixels), $b$ is the image height (in pixels), and $u_0 = 0.5a$, $v_0 = 0.5b$, $\rho_w=\frac{\textrm{a}} {\textrm{W}}$, $\rho_h=\frac{\textrm{b}}{\textrm{H}}$. The values of each pixel in the final image $I$ can be computed directly by checking if the point $p''$ lies within the area of the pixel and assigning 1 to such pixels accordingly~\cite{MultipleView}.

What is remaining is to map the coordinates of $p'' = \left(p''_{x_\texttt{PCF}}, p''_{y_\texttt{PCF}} \right)$ into a binary assignment for the different $a\times b$ pixels. But first, we need to check if $p''$ is actually inside the physical limits of the Pixel Coordinate Frame ($\texttt{PCF}$) by:

\begin{equation}
  \text{visible} =
    \begin{cases}
      \text{yes} & |p''_{x_\texttt{PCF}}| \leq \frac{W}{2} \ \vee \ |p''_{y_\texttt{PCF}}| \leq \frac{H}{2}\\
      \text{no} & \text{otherwise}
    \end{cases}       
\end{equation}
Whenever the point $p''$ is within the limits of $\texttt{PCF}$, then the pixel $I_{[k,l]}$ should be assigned to 1 whenever the index of the pixel matches the coordinates $\left(p''_{x_\texttt{PCF}}, p''_{y_\texttt{PCF}} \right)$, i.e.:
\begin{equation}
  I_{[k,l]} =
    \begin{cases}
      1 & ( p''_{x_\texttt{PCF}} == k-1) \wedge (p''_{y_\texttt{PCF}} == l-1) \wedge \text{visible}\\
      0 & \text{otherwise}
    \end{cases}  
\end{equation}
for $k \in (1,2,3...\textrm{a})$ and $l \in (1,2,3...\textrm{b})$. This process of mapping a point $p$ in the Runway Coordinate Frame $(\mathtt{RCF})$ to a pixel in the image $I$ is summarized in Figure~\ref{coordinates} (right). 

Since the scene contains both a runway and other unknown objects (see Figure~\ref{fig:image}), we define the final image $I \in \mathbb{B}^{a\times b}$ captured by the camera as:
\begin{align}
    I(\xi) = I_r(\xi) + I_n(\xi)
\end{align}
where $I_r \in \mathbb{B}^{a\times b}$ is the image corresponding to the existence of the runway in the scene and $I_n \in \mathbb{B}^{a\times b}$ is the image corresponding to the existence of other objects/noise in the scene.

\subsection{Neural Network Estimator}
We are interested in designing a Neural Network (NN)-based estimator that can process an image $I(\xi) = I_r(\xi) + I_n(\xi)$ to produce an estimate of the aircraft state $\xi$. An $F$-layer NN is specified by composing $F$ layer functions (or just layers). A layer $\omega$ with $\mathfrak{i}_\omega$ inputs and $\mathfrak{o}_\omega$ outputs is specified by a weight matrix $W^{\omega} \in \mathbb{R}^{\mathfrak{o}_\omega \times \mathfrak{i}_\omega}$ and a bias vector $b^{\omega} \in \mathbb{R}^{\mathfrak{o}_\omega}$ as follows:
\begin{equation}
    \label{eq:layer_fnc}
    L_{\theta^{\omega}}: z \mapsto \phi( W^{\omega} z + b^{\omega}), 
\end{equation}
where $\phi$ is a nonlinear function, and $\theta^{\omega} \triangleq (W^{\omega}, b^{\omega})$ for brevity. Thus, an $F$-layer NN is specified by $F$ layer functions $\{L_{\theta^{\omega}} : \omega = 1, \dots, F\}$ whose input and output dimensions are composable: that is, they satisfy $\mathfrak{i}_{\omega} = \mathfrak{o}_{\omega-1}$, $\omega = 2, \dots, F$. Specifically:
\begin{equation}
	\mathcal{NN}(I) = (L_{\theta^{F}} \circ L_{\theta^{F-1}} \circ \dots \circ L_{\theta^{1}})(I).
\end{equation}
As a common practice, we allow the output layer $L_{\theta^{F}}$ to omit the nonlinear function $\phi$.

\subsection{Problem Formulation} 
\label{sub:problem_formulation}

\begin{problem}
\label{prob:main_problem}
    Given an image $I(\xi) = I_r(\xi) + I_n(\xi)$ that contains the projection of a runway and other unknown objects and an estimation error $\epsilon>0$, design a neural network estimator $\mathcal{NN}$ such that $||\mathcal{NN}(I_r + I_n) - \xi||< \epsilon$.
\end{problem}


\begin{figure}[t!]
\centering
\includegraphics[width=1.0\columnwidth]{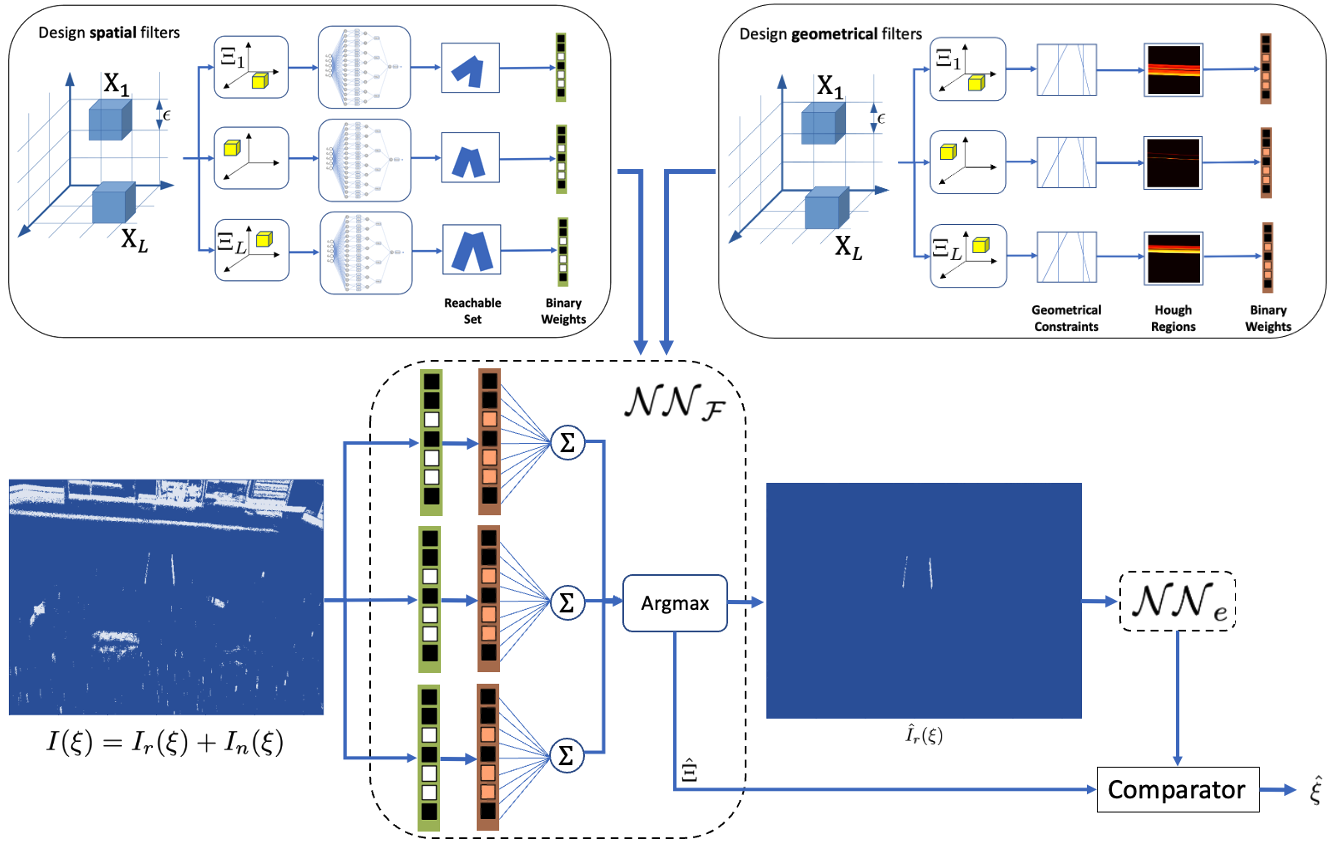}
\caption{Overall main framework elements: Spatial filter, Geometrical filter, $\mathcal{NN}_\mathcal{F}$ and $\mathcal{NN}_e$}
\label{fig:framework}
\vspace{-3mm}
\end{figure}

\section{Framework}
\label{sec:framework}
Classical machine learning approaches to solve Problem 1 entail training neural networks on large labeled data sets that contain different possibilities of runway positions and surrounding objects. Since ensuring the correctness of the resulting NN is challenging, we propose a framework in which we manually design a NN filter $\mathcal{NN}_{\mathcal{F}}$ that is guaranteed to ``filter out'' the noise $I_n$, i.e., $\mathcal{NN}_{\mathcal{F}}(I_r + I_n) = I_r$. Moreover, such a filter $\mathcal{NN}_{\mathcal{F}}$ also computes a certified bound on the possible states of the aircraft $\hat{\Xi}$. The size of this possible set of states $\hat{\Xi}$ is chosen to guarantee the $\epsilon$ bound in Problem 1. The resulting filtered-out image $\mathcal{NN}_{\mathcal{F}}(I_r + I_n)$ is then passed into a neural network estimator $\mathcal{NN}_e$ that is trained using existing techniques in machine learning. Finally, the outcome of $\mathcal{NN}_e$ is checked against the certified bounds $\hat{\Xi}$ to provide the final estimate as:
\begin{align}
    \hat{\xi} = 
    \begin{cases}
    \mathcal{NN}_e\left(\mathcal{NN}_{\mathcal{F}}(I) \right) & \text{if } \mathcal{NN}_e\left(\mathcal{NN}_{\mathcal{F}}(I) \right) \in \hat{\Xi} \\
    \text{center}(\hat{\Xi}) & \text{otherwise}
    \end{cases}
\end{align}
where $\text{center}(\hat{\Xi})$ is well defined whenever the set $\hat{\Xi}$ is a hypercube. In other words, the certified bounds $\hat{\Xi}$ are used to \emph{replace} the incorrect state estimates with ones with guaranteed error bound from within the set $\hat{\Xi}$. This process is depicted in Figure~\ref{fig:framework}. Steps to manually design the NN filter $\mathcal{NN}_{\mathcal{F}}$ and its theoretical guarantees are given in the subsequent subsections.


\subsection{Physics-based Generative Model for Runway Images:}
Our prior work in \cite{SantaCruz2022}  developed a physics-based generative model that can generate all possible images containing runways $I_r(\xi)$ based on the physical parameters of the camera $f,\rho_h, \rho_w, v_0, u_0$  (discussed in Section 2). Crucially, this physics-based generative model was shown to be mathematically equal to a change of coordinates $h:\mathbb{R}^4 \rightarrow \mathbb{R}^4$ and a neural network $\mathcal{NN}_r(h(\xi))$ with carefully selected weights and parameters (this network is depicted in Figure~\ref{fig:generative}), i.e.,
$$I_r(\xi) = \mathcal{NN}_r(h(\xi)).$$
The change of coordinates $h$ maps the state of the aircraft into the projections of the endpoints of the lines L and R on the Pixel Coordinate Frame (PCF). For the sake of brevity, we defer the details of $h$ and $\mathcal{NN}_r$ to Appendix~\ref{appendix:gen_model}, and we refer the reader to \cite{SantaCruz2022} for detailed analysis of the correctness of this generative model.

\begin{figure}[t!]
\centering
\includegraphics[width=0.8\columnwidth]{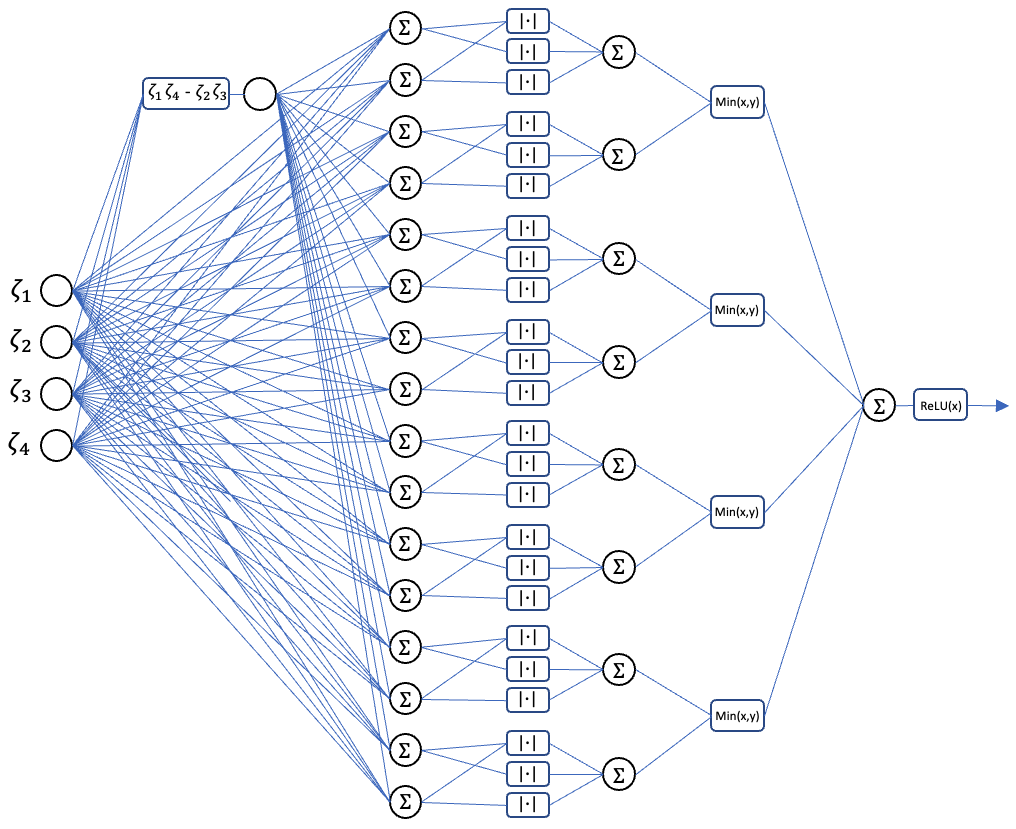}
\caption{Physics-based generative model for runway images $I_r(\xi)$ captured mathematically as a neural network~\cite{SantaCruz2022}.}
\label{fig:generative}
\vspace{-5mm}
\end{figure}



\subsection{Design of spatial filters using output reachability analysis}
Given a partitioning parameter $\delta$, we partition the state space $\Xi \subset \mathbb{R}^4$ into L regions $\Xi_1, \ldots, \Xi_L$ such that each $\Xi_i$ is an infinity-norm ball with radius $\delta$. For each of these partitions, we aim to design a spatial filter that matches the spatial properties of the runway images that can be produced by states within such a partition. To that end, consider the following filter $\mathcal{S}^{\Xi_i} \in \mathbb{B}^{a \times b}$ defined as:
\begin{align}
    \mathcal{S}^{\Xi_i} = \bigotimes_{h(\xi) \in \Xi_i} I_r(\xi) = \bigotimes_{h(\xi) \in \Xi_i} \mathcal{NN}_r(h(\xi)).
    \label{eq:S_filter}
\end{align}
Recall that all images $I_r(\xi)$ are monochromatic (i.e., each pixel can take only a value of 0 or 1), then the following result follows directly from the definition above.

\begin{proposition}
Consider the filter $\mathcal{S}^{\Xi_i}$ defined in~\eqref{eq:S_filter}. The following holds:
\begin{align}
   (i) &\xi \in \Xi_i, \; \forall \xi \in \Xi_i.[I_n(\xi) \otimes \mathcal{NN}_r(h(\xi)) = \mathbf{0}_{a,b}] \notag \\ & \qquad \qquad \qquad \Longrightarrow [I_r(\xi) + I_n(\xi)] \otimes \mathcal{S}^{\Xi_i} \!=\! I_r(\xi) \\
   (ii) & \xi \notin \Xi_i, \; I_n(\xi) \notin \mathcal{I}_r^{\Xi_i} \notag \\ & \quad \qquad \qquad \Longrightarrow [I_r(\xi) + I_n(\xi)] \otimes \mathcal{S}^{\Xi_i} \ne I_r(\xi)
\end{align}
where $\mathcal{I}_r^{\Xi_i} = \{I_r(\xi) \in \mathbb{B}^{a \times b} |  h(\xi) \in \Xi_i\}$.
\end{proposition}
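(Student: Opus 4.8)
The plan is to prove the two implications separately, treating each as a direct consequence of the defining property of the spatial filter $\mathcal{S}^{\Xi_i}$ in \eqref{eq:S_filter} together with the monochromatic (binary) nature of the images. The central observation I would exploit is that $\mathcal{S}^{\Xi_i}$ is the element-wise product (logical AND) of all runway images $I_r(\xi)$ generated by states with $h(\xi)\in\Xi_i$. Consequently, a pixel of $\mathcal{S}^{\Xi_i}$ equals $1$ if and only if \emph{every} such runway image has a $1$ in that pixel. This gives the key pointwise characterization: $\mathcal{S}^{\Xi_i}_{[k,l]} = 1 \iff I_r(\xi)_{[k,l]} = 1$ for all $\xi$ with $h(\xi)\in\Xi_i$. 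I would state this as the first lemma-like step, since both parts of the proposition rest on it.

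For part $(i)$, I would fix $\xi$ with $h(\xi)\in\Xi_i$ and compute $[I_r(\xi)+I_n(\xi)]\otimes\mathcal{S}^{\Xi_i}$ pixel by pixel. Distributing the element-wise product over the sum gives $I_r(\xi)\otimes\mathcal{S}^{\Xi_i} \;+\; I_n(\xi)\otimes\mathcal{S}^{\Xi_i}$. The second term vanishes under the stated hypothesis: since $\mathcal{S}^{\Xi_i}\otimes I_r(\xi)$ is dominated pixelwise by $I_r(\xi)$ (the AND of a family containing $I_r(\xi)$ is below $I_r(\xi)$), and the hypothesis $I_n(\xi)\otimes \mathcal{NN}_r(h(\xi)) = \mathbf{0}_{a,b}$ says the noise never lights a pixel that the runway lights, the noise cannot survive the masking by $\mathcal{S}^{\Xi_i}$. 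For the first term, because $h(\xi)\in\Xi_i$ the image $I_r(\xi)$ is one of the factors in the product defining $\mathcal{S}^{\Xi_i}$, so $\mathcal{S}^{\Xi_i}$ is $1$ wherever $I_r(\xi)$ is $1$; hence $I_r(\xi)\otimes\mathcal{S}^{\Xi_i} = I_r(\xi)$. Combining the two yields the claimed equality.

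For part $(ii)$, the contrapositive-style reasoning is cleaner. Here $\xi\notin\Xi_i$ and $I_n(\xi)\notin \mathcal{I}_r^{\Xi_i}$, where $\mathcal{I}_r^{\Xi_i}$ is precisely the set of runway images producible from $\Xi_i$. The goal is to show that the masked image \emph{cannot} reproduce $I_r(\xi)$. I would argue that if the masked image did equal $I_r(\xi)$, then $I_r(\xi)$ would have to be pixelwise dominated by $\mathcal{S}^{\Xi_i}$, which by the pointwise characterization would force $I_r(\xi)$ to agree with (a subimage of) the common runway pattern of $\Xi_i$; the hypothesis $I_n(\xi)\notin\mathcal{I}_r^{\Xi_i}$ together with $\xi\notin\Xi_i$ is designed to exclude exactly this coincidence, producing a mismatched pixel and hence the strict inequality.

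The main obstacle I anticipate is part $(ii)$: making rigorous the claim that no conspiracy between the out-of-partition runway image $I_r(\xi)$ and the noise $I_n(\xi)$ can reconstruct a valid in-partition runway image after masking. The hypotheses must be used in full force, and I would need to be careful about the direction of the set membership $I_n(\xi)\notin\mathcal{I}_r^{\Xi_i}$ and how the floor-operation and visibility conditions in the camera model affect which pixels can possibly be shared. Part $(i)$, by contrast, is essentially a distributivity-plus-domination computation and should go through with little friction.
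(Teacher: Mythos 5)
Your attempt contains a genuine logical break in part (i), and part (ii) is a plan rather than a proof. (For reference: the paper gives no proof of this proposition at all --- it asserts that the result ``follows directly from the definition'' --- so the review below judges your argument on its own terms.) The break in part (i): your key lemma reads $\bigotimes$ literally as pixelwise AND, so $\mathcal{S}^{\Xi_i}_{[k,l]}=1$ iff \emph{every} in-partition runway image is $1$ at $[k,l]$, i.e.\ $\mathcal{S}^{\Xi_i}\le I_r(\xi)$ pixelwise for each $\xi$ with $h(\xi)\in\Xi_i$ --- and you state this correctly (``the AND of a family containing $I_r(\xi)$ is below $I_r(\xi)$''). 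But you then conclude ``so $\mathcal{S}^{\Xi_i}$ is $1$ wherever $I_r(\xi)$ is $1$; hence $I_r(\xi)\otimes\mathcal{S}^{\Xi_i}=I_r(\xi)$,'' which is the \emph{reverse} domination $I_r(\xi)\le\mathcal{S}^{\Xi_i}$ and does not follow. Under your own lemma, $I_r(\xi)\otimes\mathcal{S}^{\Xi_i}=\mathcal{S}^{\Xi_i}$, so your computation yields $[I_r(\xi)+I_n(\xi)]\otimes\mathcal{S}^{\Xi_i}=\mathcal{S}^{\Xi_i}$, and part (i) would then force $\mathcal{S}^{\Xi_i}=I_r(\xi)$ simultaneously for every in-partition $\xi$ --- false whenever two states in the partition produce distinct images (take $I_n=\mathbf{0}_{a,b}$ to see the proposition itself fail under the AND reading). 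The statement is only true if $\mathcal{S}^{\Xi_i}$ is the pixelwise \emph{union} (OR) of the in-partition images, notation notwithstanding; this is also the reading the paper relies on, since its remark that the hypothesis $\forall \xi\in\Xi_i.[I_n(\xi)\otimes\mathcal{NN}_r(h(\xi))=\mathbf{0}_{a,b}]$ is equivalent to $I_n(\xi)\otimes\mathcal{S}^{\Xi_i}=\mathbf{0}_{a,b}$, and its description of the filter as keeping pixels within $\delta/\rho_w$ of the runway, hold for the union but not for the intersection. With the union reading your part (i) goes through, but then your lemma must flip (``$1$ iff \emph{some} in-partition image is $1$''), and the vanishing of the noise term genuinely needs the hypothesis in its full universally quantified strength --- noise must miss \emph{every} in-partition image, not merely the particular $I_r(\xi)$, which is all you invoke.

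For part (ii) you acknowledge the obstacle but never overcome it: no mismatched pixel is actually produced. The missing reduction is this: since $I_r(\xi)$ and $I_n(\xi)$ have disjoint supports (implicit in $I=I_r+I_n$ being binary), the equality $[I_r(\xi)+I_n(\xi)]\otimes\mathcal{S}^{\Xi_i}=I_r(\xi)$ holds if and only if both $I_r(\xi)\le\mathcal{S}^{\Xi_i}$ pixelwise and $I_n(\xi)\otimes\mathcal{S}^{\Xi_i}=\mathbf{0}_{a,b}$. Hence proving (ii) requires showing that $h(\xi)\notin\Xi_i$ rules out the containment $I_r(\xi)\le\mathcal{S}^{\Xi_i}$; the noise hypothesis $I_n(\xi)\notin\mathcal{I}_r^{\Xi_i}$ cannot substitute for this, since an image with empty intersection with the mask is in particular not a runway image of the partition. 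That containment claim is a geometric assertion --- a line segment whose endpoints (in the $\zeta$ coordinates) lie outside the partition box must light at least one pixel outside the union of the in-partition segments --- and it is exactly where the discretization subtlety you flagged (the flooring in the pixel map can make a state just outside $\Xi_i$ produce an image identical to an in-partition one) actually bites. Your sketch names this risk but leaves it, and therefore part (ii), unresolved.
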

Note that the condition $\forall \xi \in \Xi_i.[I_n(\xi) \otimes \mathcal{NN}_r(h(\xi)) = \mathbf{0}_{a,b}]$ is equivalent to $I_n(\xi) \otimes \mathcal{S}^{\Xi^i} = \mathbf{0}_{a,b}$. That is, the filter $\mathcal{S}^{\Xi_i}$ is capable of removing all noise in the image as long as the noise image $I_n(\xi)$ does not affect pixels that are $\delta/\rho_w$ away from the runway image $I_r(\xi)$. Figure~\ref{fig:spatial_filter} shows an example of such a filter.
Specifically, equations (11)-(12) imply that the filter will accurately process the filtered image, provided that the noise does not resemble the pattern of runways. Additionally, the filter must be applied to the specific region corresponding to the state responsible for generating such a runway. Furthermore, it is reasonable to assume that as we increase the geometric complexity of the runway, the likelihood of noise resembling runway patterns diminishes. In other words, the more intricate the entity we are examining, the safer it is to rely on our assumptions.

\begin{figure}[t!]
\centering
\includegraphics[width=0.8\columnwidth]{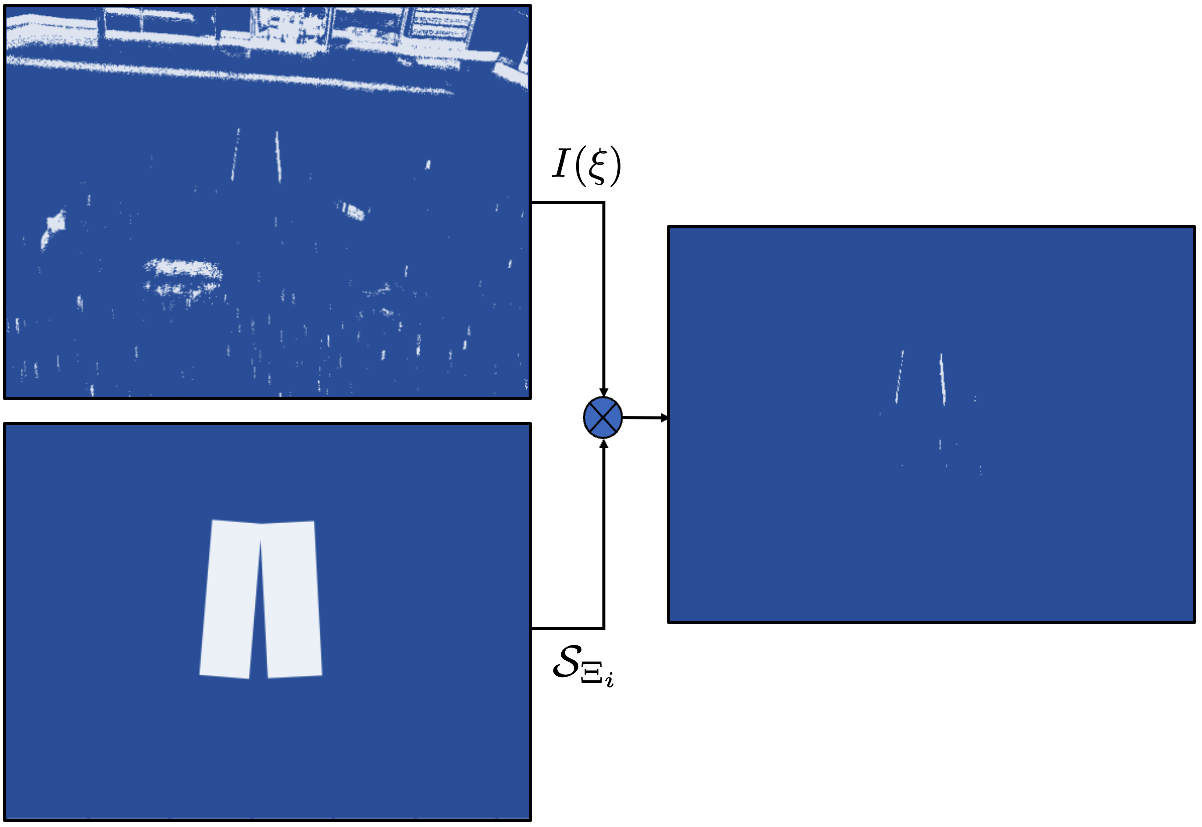}
\caption{Spatial filtering focuses attention on different regions.}
\label{fig:spatial_filter}
\vspace{-3mm}
\end{figure}



What is remaining is to provide an algorithm that can compute the filter $\mathcal{S}^{\Xi_i}$ for each partition $\Xi_i$. Thanks to the fact that the physics-based generative model $\mathcal{NN}_r(\xi)$ is captured as a neural network, one can use output reachability algorithms to compute an overapproximation of the reach set (set of all possible images) for the runway image $I_r(\xi)$. For that end, we leverage Mixed-monotonicity reachability analysis of neural networks~\cite{Meyer2022} by leveraging the following result:


\begin{proposition}(from \cite{Meyer2022})
Given a neural network $\mathcal{NN}: \mathbb{R}^\mathfrak{i} \rightarrow \mathbb{R}^\mathfrak{o} $ and an interval $[\underline{J}, \overline{J}] \subseteq \mathbb{R}^{\mathfrak{o} \times \mathfrak{i}}$ bounding the derivative of $\mathcal{NN}$ for all input $\zeta \in [\underline{\zeta},{\overline{\zeta}}]$. Let us denote the center of the interval as $J^*$ and for each output dimension $i \in \{1,...,\mathfrak{o} \}$, define input vectors $\underline{\zeta}_{[i,:]}, \overline{\zeta}_{[i,:]} \in  \mathbb{R}^{\mathfrak{i}} $ and a row vector $\alpha^i \in  \mathbb{R}^{1 \times \mathfrak{i}}$ such that for all $j \in \{1,...,\mathfrak{i} \}$ the following holds:
\begin{align}
    &(\underline{\psi}_{[i,j]}, \overline{\psi}_{[i,j]}, \alpha_{[i,j]}) = \; \nonumber
   \\ 
    &\qquad \qquad \begin{cases}
      (\underline{\zeta}_{[:,j]},\overline{\zeta}_{[:,j]},min(0,\underline{J}_{[i,j]})) & \text{if $J^*_{[i,j]} \geq 0$}\\
      (\overline{\zeta}_{[:,j]},\underline{\zeta}_{[:,j]},max(0,\overline{J}_{[i,j]})) & \text{if $J^*_{[i,j]} \leq 0$}\\
    \end{cases}
\end{align}

Then for all neural network input $\zeta \in [\underline{\zeta},\overline{\zeta}]$ and $i \in \{1,...,\mathfrak{o} \}$, we have:
\begin{align}
    \mathcal{NN}(\zeta)_{[i,:]} \in [\mathcal{NN}(\underline{\psi}_{[i,:]}-\alpha_{[i,:]}(\underline{\psi}_{[i,:]}-\overline{\psi}_{[i,:]})), \; \nonumber
   \\ 
   \mathcal{NN}(\overline{\psi}_{[i,:]}+\alpha_{[i,:]}(\underline{\psi}_{[i,:]}-\overline{\psi}_{[i,:]}))]
\end{align}
\end{proposition}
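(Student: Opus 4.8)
The plan is to recognize this as an instance of mixed-monotonicity reachability and to prove it by exhibiting, for each output, a decomposition function for $\mathcal{NN}$ whose monotonicity collapses the interval over-approximation onto two corner evaluations. First I would fix an output coordinate $i$ and work with the scalar map $\zeta \mapsto \mathcal{NN}(\zeta)_{[i,:]}$, whose gradient is the $i$-th row of the Jacobian and hence is confined to $[\underline{J}_{[i,:]}, \overline{J}_{[i,:]}]$ for all $\zeta \in [\underline{\zeta}, \overline{\zeta}]$. Because the claimed bound is stated coordinatewise in $i$, the whole argument decouples across output coordinates, so it suffices to treat a single scalar component.

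Next I would classify each input coordinate $j$ by the sign of $J^*_{[i,j]}$, i.e.\ by whether the corresponding partial derivative is guaranteed of one sign or may change sign over the box. For a sign-definite coordinate the scalar map is monotone in $\zeta_j$, so driving $\zeta_j$ to the appropriate endpoint ($\underline{\zeta}_j$ or $\overline{\zeta}_j$) gives the coordinatewise minimizer/maximizer; this is exactly the role played by $\underline{\psi}_{[i,:]}$ and $\overline{\psi}_{[i,:]}$, and there the clipped gain vanishes since $\min(0,\underline{J}_{[i,j]})=0$ or $\max(0,\overline{J}_{[i,j]})=0$. The real content lies in the sign-indefinite coordinates, where no single endpoint is uniformly best; here the clipped term $\alpha_{[i,j]}$ shifts the evaluation argument so as to conservatively absorb the worst admissible variation of the derivative. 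I would analyze the increment of $\mathcal{NN}(\cdot)_{[i,:]}$ along line segments using the fundamental theorem of calculus, writing it as an integral of the bounded partial derivatives, and use the clipping to control the contribution of the indefinite coordinates.

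The cleanest route, which I would follow for the rigorous step, is to package this as a decomposition function $g_i(x,\hat{x})$ satisfying $g_i(x,x)=\mathcal{NN}(x)_{[i,:]}$, monotone nondecreasing in $x$ and nonincreasing in $\hat{x}$; the monotonicity is precisely what the sign bookkeeping together with the clipping of $\alpha_{[i,:]}$ is designed to guarantee. Mixed monotonicity then yields $\mathcal{NN}(\zeta)_{[i,:]} = g_i(\zeta,\zeta) \in [\,g_i(\underline{\zeta},\overline{\zeta}),\, g_i(\overline{\zeta},\underline{\zeta})\,]$ for every $\zeta$ in the box, and the final step is to verify that the two corner evaluations $g_i(\underline{\zeta},\overline{\zeta})$ and $g_i(\overline{\zeta},\underline{\zeta})$ coincide with the two expressions $\mathcal{NN}(\underline{\psi}_{[i,:]}-\alpha_{[i,:]}(\underline{\psi}_{[i,:]}-\overline{\psi}_{[i,:]}))$ and $\mathcal{NN}(\overline{\psi}_{[i,:]}+\alpha_{[i,:]}(\underline{\psi}_{[i,:]}-\overline{\psi}_{[i,:]}))$ appearing in the statement. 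Since the relevant $\mathcal{NN}$ is the generative network $\mathcal{NN}_r$, which is differentiable away from its activation breakpoints, I would also confirm that a valid interval $[\underline{J},\overline{J}]$ can be propagated layer by layer so that the hypothesis on the Jacobian is actually met.

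I expect the main obstacle to be the sign-indefinite coordinates: establishing that the single $\alpha$-shifted evaluation is a legitimate bound---equivalently, that $g_i$ is genuinely monotone in the required senses---demands that the shift dominate the worst-case sign change of the derivative uniformly over the box, and getting the direction and magnitude of the $\min(0,\cdot)$ and $\max(0,\cdot)$ clipping exactly right is the delicate point; the definite-sign coordinates and the reduction to scalars are then routine sign bookkeeping plus the mean value theorem. Because the statement is quoted from \cite{Meyer2022}, I would finish by cross-checking my reconstructed decomposition and corner identities against the construction given there.
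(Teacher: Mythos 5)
First, a calibration point: the paper offers \emph{no} proof of this proposition---it is imported verbatim from \cite{Meyer2022}---so the only meaningful comparison is with the proof in that reference. Your route is exactly that proof's route: decouple across output coordinates, classify each input coordinate by the sign of $J^*_{[i,j]}$, absorb the sign bookkeeping into a decomposition function $g_i$ that agrees with $\mathcal{NN}(\cdot)_{[i,:]}$ on the diagonal, is nondecreasing in its first argument and nonincreasing in its second, and then read the enclosure off the two corner evaluations $g_i(\underline{\zeta},\overline{\zeta})$ and $g_i(\overline{\zeta},\underline{\zeta})$. You also correctly locate the crux: monotonicity of $g_i$ in the sign-indefinite coordinates, which is exactly what the clipping $\min(0,\underline{J}_{[i,j]})$, $\max(0,\overline{J}_{[i,j]})$ is engineered to deliver. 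One slip in the setup: the sign of $J^*_{[i,j]}$ does \emph{not} tell you whether the partial derivative is sign-definite (your ``i.e.''); it only selects the endpoint assignment. Sign-definiteness, and hence $\alpha_{[i,j]}=0$, is decided by $\underline{J}_{[i,j]}\ge 0$ or $\overline{J}_{[i,j]}\le 0$. You use the right dichotomy afterwards, so this is cosmetic.

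The step of your plan that cannot be completed as written is the final corner-matching, and the obstruction sits in the statement you were handed rather than in your argument. Any decomposition function of the required type produces corners of the form $\mathcal{NN}(\underline{\psi}_{[i,:]})-\alpha_{[i,:]}(\underline{\psi}_{[i,:]}-\overline{\psi}_{[i,:]})$ and $\mathcal{NN}(\overline{\psi}_{[i,:]})+\alpha_{[i,:]}(\underline{\psi}_{[i,:]}-\overline{\psi}_{[i,:]})$, i.e., with the scalar, nonnegative correction $\alpha_{[i,:]}(\underline{\psi}_{[i,:]}-\overline{\psi}_{[i,:]})$ applied \emph{outside} the network evaluation; this is the form the mixed-monotonicity argument actually delivers (and, modulo sign conventions, the form in \cite{Meyer2022}). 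The transcription above instead places that correction \emph{inside} the argument of $\mathcal{NN}$. The two expressions agree only when $\alpha_{[i,:]}=0$ or $\mathcal{NN}$ is affine; in general the ``inside'' version asks you to evaluate $\mathcal{NN}$ at points outside $[\underline{\zeta},\overline{\zeta}]$, where the hypothesis on the Jacobian says nothing, and a one-dimensional network with $\underline{J}<0\le J^*$ that is modified just outside the box already violates the claimed enclosure. So your proof, carried out rigorously, establishes the corrected (outside) form, and your planned cross-check against \cite{Meyer2022} is precisely what would expose the misplaced parenthesis---state that explicitly rather than trying to force the corner identities. Two hygiene items for the write-up: since the relevant network $\mathcal{NN}_r$ (ReLU, absolute values, min) is only piecewise differentiable, run the fundamental-theorem-of-calculus step with Clarke generalized Jacobians or along the full-measure differentiability set of each segment; and the layer-by-layer propagation of $[\underline{J},\overline{J}]$ you defer to at the end is supplied by the paper's Appendix~\ref{sec:jacobian}, so it can be cited rather than rebuilt.
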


To implement the method in Proposition 2, we define the input vectors as $\overline{\zeta} =\text{center}(\Xi_i)+\frac{\delta}{2}$ and $\underline{\zeta} =\text{center}(\Xi_i)-\frac{\delta}{2}$. Additionally, we compute the bounds on the Jacobian matrix of the neural network $\mathcal{NN}_r$ to find the bounds $[\underline{J}, \overline{J}]$. Details about obtaining such bounds are given in Appendix~\ref{sec:jacobian}. These bounds on the output of $\mathcal{NN}_r$ identifies which pixels are equal to zero for all the images generated by the states in each $\Xi_i$, which can be used to compute the filters in~\eqref{eq:S_filter}.


\subsection{Design of Geometric Filters using Hough Transform}

The spatial filters $\mathcal{S}^{\Xi_i}$ can be used to focus the attention on different regions of the state space. Although these filters provide a guarantee of the output of the filter that satisfies $\xi \in \Xi_i$ it does not provide any guarantee on the output of the filters for which $\xi \notin \Xi_i$. Therefore, it is necessary to augment the spatial filters with another filter that aims to detect whether the output follows the geometrical structure of the runway images.
To achieve this, consider the following filters:
\begin{align}
    \mathcal{H}^{\Xi_i}(I) =
    \begin{cases} 
    1 & \text{if } \exists \xi \in \Xi_i \text{ such that } I = \mathcal{NN}_r(h(\xi))\\
    0 & \text{otherwise}
    \end{cases} 
    \label{eq:H_filter}
\end{align}
Such filter can be efficiently computed using the classical Hough-space transformation \cite{Szeliski22}. In this transformation, a straight line is represented by a normal line that passes through the origin and is orthogonal to that straight line. The equation of the normal line is given by $\rho = \zeta_1\cos{(\theta)} + \zeta_2\sin{\theta}$, where $\rho$ is the length of the normal line and $\theta$ is the angle between the normal line and the x-axis of the Pixel Coordinate Frame. By using the projections of the endpoints of the runway lines edges obtained from $h(\xi)=[\zeta_1,\zeta_2,\zeta_3,\zeta_4]$ as $P_1 = (\zeta_1, \zeta_2)$ and $P_2 = (\zeta_3, \zeta_4)$, we can solve for $\theta$ and $\rho$ for the generated image as:
\begin{align}
    \theta &= \tan^{-1}{(\frac{\zeta_1-\zeta_3}{\zeta_4-\zeta_2})}\\
    \rho &= \zeta_1 \cos{(\theta)} +\zeta_2 \sin{(\theta)}
\end{align}
Given a partition $\Xi_i$, we can obtain the range of $\rho$, $\theta$ for all runway images as follows. First, recall that each partition $\Xi_i$ is an infinity ball with a radius equal to $\delta$ around a center point $\text{center}(\Xi_i) \in \mathbb{R}^4$. The two points $P_1 = (\text{center}(\Xi_i)_{[1]}, \text{center}(\Xi_i)_{[2]})$ and $P_2 = (\text{center}(\Xi_i)_{[3]}, \text{center}(\Xi_i)_{[4]})$ represent 2-dimensional points in the Pixel Coordinate Frame that corresponds to the center of $\Xi_i$ (see Figure \ref{fig:hough_geo} for illustration). Following the 2-dimensional geometry of the Pixel Coordinate Frame, it is direct to show that:
%
%
\begin{align}
(\zeta^{c_i}_1, \zeta^{c_i}_2, \zeta^{c_i}_3, \zeta^{c_i}_4) &= \text{center}(\Xi_i) \label{eq:hough_set_1}\\
\theta^{\Xi_i}_{max} &= 
\begin{cases}
    \tan^{-1}{(\frac{\zeta^{c_i}_1-\zeta^{c_i}_3+2\delta}{\zeta^{c_i}_4-\zeta^{c_i}_2+2\delta})},& \text{if } \frac{\zeta^{c_i}_4-\zeta^{c_i}_2}{\zeta^{c_i}_3-\zeta^{c_i}_1} > 0\\
    \tan^{-1}{(\frac{\zeta^{c_i}_1-\zeta^{c_i}_3+2\delta}{\zeta^{c_i}_4-\zeta^{c_i}_2-2\delta})},              & \text{otherwise}
\end{cases} \\
\theta^{\Xi_i}_{min} &= 
\begin{cases}
    \tan^{-1}{(\frac{\zeta^{c_i}_1-\zeta^{c_i}_3-2\delta}{\zeta^{c_i}_4-\zeta^{c_i}_2-2\delta})},& \text{if } \frac{\zeta^{c_i}_4-\zeta_2}{\zeta^{c_i}_3-\zeta^{c_i}_1} > 0\\
    \tan^{-1}{(\frac{\zeta^{c_i}_1-\zeta^{c_i}_3-2\delta}{\zeta^{c_i}_4-\zeta^{c_i}_2+2\delta})},              & \text{otherwise}
\end{cases} \\
\rho^{\Xi_i}_{min} &= 
    \underline{b}_{\delta}\frac{\sqrt{{1+m^2}}}{m+\frac{1}{m}} \\
\rho^{\Xi_i}_{max} &= 
    \overline{b}_{\delta}\frac{\sqrt{{1+m^2}}}{m+\frac{1}{m}}
    \label{eq:hough_set_2}
\end{align}
where $m, \overline{b}_{\delta}, \underline{b}_{\delta}$ are defined in Appendix~\ref{sec:ranges}.

\begin{figure}[t!]
\centering
\includegraphics[width=1.0\columnwidth]{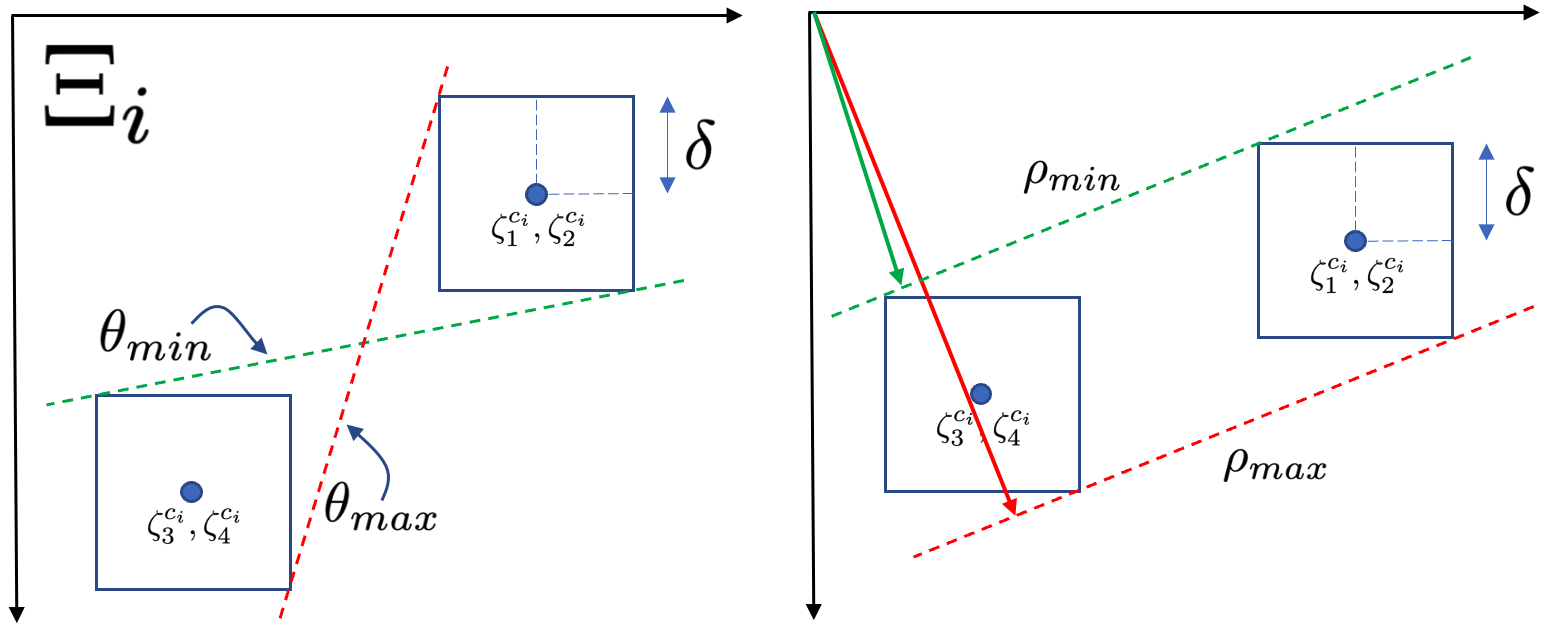}
\caption{Feasible range of angles and distances in Hough Space.}
\label{fig:hough_geo}
\vspace{-3mm}
\end{figure}

Equations~\eqref{eq:hough_set_1}-\eqref{eq:hough_set_2} define the reachable set of the runway images within the Hough space (the $\rho-\theta$ space). 
Moreover, the discretization introduced in the Pixel Coordinate Frame (the flooring operation in~\eqref{eq:pcf_floor}) introduces a discretization over the range of $\rho$ and $\theta$ computed by equations~\eqref{eq:hough_set_1}-\eqref{eq:hough_set_2} which existing implementations of Hough transformation algorithms take into account. We denote by $\mathcal{L}^{\Xi_i} = \{(\rho^{\Xi_i}_{max},\theta^{\Xi_i}_{\max}), \ldots, (\rho^{\Xi_i}_{min},\theta^{\Xi_i}_{min})\}$ the discrete set of the allowable values of $\rho$ and $\theta$ within the partition $\Xi_i$. 
For each possible $(\rho_j, \theta_j)$ in $\mathcal{L}^{\Xi_i}$, we define the filter $\mathcal{R}^{\Xi_i}(\rho_j, \theta_j) \in \mathbb{B}^{a\times b}$ as:
\begin{align}
    \mathcal{R}^{\Xi_i}(\rho_j, \theta_j)_{[k,l]} = 
    \begin{cases}
        1 & \text{if} \ l-1<-\frac{\cos{\theta_j}}{\sin{\theta_j}}k + \frac{\rho_j}{\sin{\theta_j}}<l\\
        0 & \text{otherwise}
    \end{cases}
\end{align}
%
%
For each filter $\mathcal{R}^{\Xi_i}(\rho_j, \theta_j)$ we can define a mismatching score that computes how far the input image $I$ is from the expected output of this filter as:
\begin{align}
    \mathcal{M}(I, \mathcal{R}^{\Xi_i}(\rho_j, \theta_j)) = \bigg\Vert -I \oplus \mathcal{R}^{\Xi_i}(\rho_j, \theta_j) \bigg\Vert_1
\end{align}
That is, $\mathcal{M}$ is equal to zero whenever the input image $I$ matches exactly the line represented by $\mathcal{R}^{\Xi_i}(\rho_j, \theta_j)$ and non-zero otherwise. Finally, we can implement the filter $\mathcal{H}^{\Xi_i}$ in~\eqref{eq:H_filter} as:
\begin{align}
\mathcal{H}^{\Xi_i}(I) = 
\begin{cases}
    1 & \text{if } \arg\min\{\mathcal{M}(I, \mathcal{R}^{\Xi_i}(\rho^{\Xi_i}_{max}, \theta^{\Xi_i}_{max})), \ldots,
    \\
    & \qquad \qquad \quad \mathcal{M}(I, \mathcal{R}^{\Xi_i}(\rho^{\Xi_i}_{min}, \theta^{\Xi_i}_{min})) \} = 0 \\
    0 & \text{otherwise}
\end{cases}
\label{eq:H_filter_final}
\end{align}
In other words, the filter $\mathcal{H}^{\Xi_i}$ produces 1 whenever any of the filters $\mathcal{R}^{\Xi_i}(\rho^{\Xi_i}_{max}, \theta^{\Xi_i}_{max}), \ldots, \mathcal{R}^{\Xi_i}(\rho^{\Xi_i}_{min}, \theta^{\Xi_i}_{min})$ were able to match its input image. The following proposition follows directly from the definition of $\mathcal{H}^{\Xi_i}(I)$ above.
\begin{proposition}
    Consider the filter $\mathcal{H}^{\Xi_i}$ defined in~\eqref{eq:H_filter_final}. The following holds:
    \begin{align}
        \mathcal{H}^{\Xi_i}(I) = 1 \Longleftrightarrow \exists \xi \in \Xi_i \text{ such that } I = \mathcal{NN}_r(h(\xi))
    \end{align}
\end{proposition}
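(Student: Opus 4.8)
The plan is to prove both directions of the biconditional by collapsing them onto a single set equality between the admissible line-filters $\{\mathcal{R}^{\Xi_i}(\rho_j,\theta_j) : (\rho_j,\theta_j) \in \mathcal{L}^{\Xi_i}\}$ and the runway images $\{\mathcal{NN}_r(h(\xi)) : \xi \in \Xi_i\}$. The first step is the elementary observation that, since every image here is binary, the mismatch score $\mathcal{M}(I,\mathcal{R}^{\Xi_i}(\rho_j,\theta_j)) = \bigl\Vert -I \oplus \mathcal{R}^{\Xi_i}(\rho_j,\theta_j)\bigr\Vert_1$ is the $\ell_1$ norm of the element-wise difference of two $\{0,1\}$-matrices; it counts the disagreeing pixels and therefore vanishes if and only if $I = \mathcal{R}^{\Xi_i}(\rho_j,\theta_j)$. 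Unfolding the definition in \eqref{eq:H_filter_final}, this already shows that $\mathcal{H}^{\Xi_i}(I) = 1$ if and only if the minimum over the tested filters is zero, i.e. there exists $(\rho_j,\theta_j) \in \mathcal{L}^{\Xi_i}$ with $I = \mathcal{R}^{\Xi_i}(\rho_j,\theta_j)$.

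What remains is to identify the set of admissible filters with the set of runway images, which I would do through the Hough parametrization. On one hand, for a fixed state the image $\mathcal{NN}_r(h(\xi))$ is the rasterization of the line through the endpoint projections $P_1=(\zeta_1,\zeta_2)$ and $P_2=(\zeta_3,\zeta_4)$ stored in $h(\xi)$, whose Hough coordinates are exactly the $(\theta,\rho)$ produced by the parametrization $\theta = \tan^{-1}\!\big(\tfrac{\zeta_1-\zeta_3}{\zeta_4-\zeta_2}\big)$ and $\rho = \zeta_1\cos\theta + \zeta_2\sin\theta$. On the other hand, the filter $\mathcal{R}^{\Xi_i}(\rho_j,\theta_j)$ sets pixel $[k,l]$ to one precisely when the line value $-\tfrac{\cos\theta_j}{\sin\theta_j}k + \tfrac{\rho_j}{\sin\theta_j}$ lies in $(l-1,l)$, i.e. it is the rasterization of the same line under the identical flooring convention used in \eqref{eq:pcf_floor} to build $\mathcal{NN}_r$. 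Hence $\mathcal{R}^{\Xi_i}(\rho(\xi),\theta(\xi)) = \mathcal{NN}_r(h(\xi))$ for every $\xi$, and the $(\Leftarrow)$ direction follows: if $\xi \in \Xi_i$ generates $I = \mathcal{NN}_r(h(\xi))$, then $(\rho(\xi),\theta(\xi))$ lies in $\mathcal{L}^{\Xi_i}$ — this is exactly what the range bounds \eqref{eq:hough_set_1}--\eqref{eq:hough_set_2} together with the pixel-grid discretization certify — so the matching filter appears among those tested by $\mathcal{H}^{\Xi_i}$ and drives the mismatch to zero.

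The $(\Rightarrow)$ direction is where I expect the real work. Assuming $\mathcal{H}^{\Xi_i}(I)=1$, one extracts a pair $(\rho_j,\theta_j)\in\mathcal{L}^{\Xi_i}$ with $I = \mathcal{R}^{\Xi_i}(\rho_j,\theta_j)$ and must produce a genuine $\xi \in \Xi_i$ with $\mathcal{NN}_r(h(\xi)) = I$. This needs $\mathcal{L}^{\Xi_i}$ to contain only \emph{realizable} Hough pairs, whereas \eqref{eq:hough_set_1}--\eqref{eq:hough_set_2} only deliver an axis-aligned box $[\rho^{\Xi_i}_{min},\rho^{\Xi_i}_{max}]\times[\theta^{\Xi_i}_{min},\theta^{\Xi_i}_{max}]$; because the Hough map $\xi\mapsto(\rho(\xi),\theta(\xi))$ is nonlinear, the true image of the partition $\Xi_i$ need not be this rectangle, so a careless discretization of the rectangle could enumerate spurious pairs attained by no state. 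The argument I would give is that $\mathcal{L}^{\Xi_i}$ must be read as the discrete set of pairs genuinely attained over $\Xi_i$ after the floor operation, equivalently that every surviving grid point of the rectangle is realized by some $\xi\in\Xi_i$; establishing this realizability (or, failing an exactness proof, restricting $\mathcal{L}^{\Xi_i}$ to the realizable grid points) makes the extraction map $(\rho_j,\theta_j)\mapsto\xi$ well defined and closes the backward implication, recovering the definitional filter in \eqref{eq:H_filter}.
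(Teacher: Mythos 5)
Your reduction is the right one, and it is worth saying up front that the paper gives no proof to compare against: it asserts that the proposition ``follows directly from the definition of $\mathcal{H}^{\Xi_i}(I)$.'' That one-liner is only valid under a particular reading of $\mathcal{L}^{\Xi_i}$ --- the one the paper introduces verbally as ``the discrete set of the allowable values of $\rho$ and $\theta$ within the partition $\Xi_i$,'' i.e., exactly those Hough pairs attained by some $h(\xi)\in\Xi_i$ after pixel discretization --- under which both directions are true essentially by construction. Your first step (the mismatch score $\mathcal{M}$ vanishes iff the binary images agree pixel-for-pixel, so $\mathcal{H}^{\Xi_i}(I)=1$ iff $I=\mathcal{R}^{\Xi_i}(\rho_j,\theta_j)$ for some $(\rho_j,\theta_j)\in\mathcal{L}^{\Xi_i}$, charitably reading the paper's $\arg\min\{\cdots\}=0$ as $\min\{\cdots\}=0$) is correct, and your $(\Leftarrow)$ direction matches the intended argument. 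Your real contribution is exposing that the definitional reading of $\mathcal{L}^{\Xi_i}$ is doing all the work: the computable bounds \eqref{eq:hough_set_1}--\eqref{eq:hough_set_2} only produce an axis-aligned rectangle in $(\rho,\theta)$-space, the map $\xi\mapsto(\rho,\theta)$ is nonlinear, and if $\mathcal{L}^{\Xi_i}$ is instantiated as the discretized rectangle then only soundness $(\Leftarrow)$ survives; completeness $(\Rightarrow)$ can fail on spurious grid pairs realized by no state --- and note that it is precisely the $(\Rightarrow)$ direction (via its contrapositive, $\mathcal{H}^{\hat{\Xi}}(I\otimes\mathcal{S}^{\hat{\Xi}})=0$) that the contradiction argument in Theorem~\ref{th:correctness} relies on. Your proposed repair, restricting $\mathcal{L}^{\Xi_i}$ to realizable pairs, coincides with the paper's implicit reading, so your proof is a faithful and more honest expansion of the paper's claim rather than a different result.

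One subtlety you should flag with the same care you gave realizability: the identity $\mathcal{R}^{\Xi_i}(\rho(\xi),\theta(\xi))=\mathcal{NN}_r(h(\xi))$, on which both of your directions lean, is not automatic. As defined, $\mathcal{R}^{\Xi_i}(\rho_j,\theta_j)$ lights up every pixel $[k,l]$ crossed by the \emph{infinite} line, over all columns $k$ of the image, whereas $\mathcal{NN}_r$ renders the runway edge as a \emph{segment} between the projected endpoints $(\zeta_1,\zeta_2)$ and $(\zeta_3,\zeta_4)$ (and the physical runway has two edges, $L$ and $R$). Since the zero-mismatch test demands exact pixel-wise equality, the two rasterizations must be reconciled by convention (e.g., restricting $\mathcal{R}^{\Xi_i}$ to the column range spanned by the projected endpoints, and matching one filter per edge). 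The paper is silent on this too, so it is a gap in the statement's setup rather than an error introduced by you, but a complete proof would have to close it explicitly.
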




\subsection{Design of the NN filter $\mathcal{NN}_{\mathcal{F}}$}
The final filter $\mathcal{NN}_F$ consists of processing the images $I$ using all the spatial filters $\mathcal{S}^{\Xi_1}, \ldots, \mathcal{S}^{\Xi_l}$ followed by the geometric filters $\mathcal{H}^{\Xi_1}, \ldots, \mathcal{H}^{\Xi_l}$. Finally, the filter $\mathcal{NN}_F$ identifies the partition $\hat{\Xi}$ for which the geometric filter returns $1$ to produce its final outputs as follows:
\begin{align}
    \hat{\Xi} &= \arg\max\{\mathcal{H}^{\Xi_1}(I \otimes \mathcal{S}^{\Xi_1}), \ldots, \mathcal{H}^{\Xi_l}(I \otimes \mathcal{S}^{\Xi_l}) \} \label{eq:NNf_1}\\
    \hat{I}_r &= I \otimes \mathcal{S}^{\hat{\Xi}}
    \label{eq:NNf_2}
\end{align}
The following result captures the correctness of the $\mathcal{NN}_F$.
\begin{theorem}
Consider a noisy image $I(\xi) = I_r(\xi) + I_n(\xi)$, a partitioning of the state space $\Xi$ into infinity balls of radius $\delta$ namely $\Xi_1, \ldots, \Xi_l$. Denote by $\Xi^*$ the partition for which the aircraft state $\xi$ belongs, i.e., $h(\xi) \in \Xi^*$. 
Under the following assumptions:
\begin{align}
    (i) & I_n(\xi) \notin \{\mathcal{NN}_r(h(\xi)) \; | \; h(\xi) \in \Xi \} \label{eq:thm1_ass1}\\
    (ii) & \forall \xi \in \Xi^*.[I_n(\xi) \otimes \mathcal{NN}_r(h(\xi)) = \mathbf{0}_{a,b}] \label{eq:thm1_ass2}
\end{align}
then the following holds: 
\begin{align}
    &\hat{\Xi} = \Xi^* \label{eq:thm1_1}\\
    &\hat{I}_r = I_r(\xi) \label{eq:thm1_2} \\
    &\Vert \xi - \hat{\xi} \Vert \le 4 L_h \delta \qquad \forall \hat{\xi} \in \hat{\Xi} \label{eq:thm1_3}
\end{align}
where $(\hat{\Xi}, \hat{I}_r) = \mathcal{NN}_F(I(\xi))$  and $L_h$ is the Lipschitz constant of $h^{-1}$.
\label{th:correctness}
\end{theorem}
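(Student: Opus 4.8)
The plan is to reduce the three conclusions \eqref{eq:thm1_1}--\eqref{eq:thm1_3} to the single statement that the selection rule \eqref{eq:NNf_1} returns the true partition $\Xi^*$ and nothing else. Since every $\mathcal{H}^{\Xi_i}$ is $\{0,1\}$-valued, this splits into a \emph{positive} claim, $\mathcal{H}^{\Xi^*}(I\otimes\mathcal{S}^{\Xi^*})=1$, and a \emph{negative} claim, $\mathcal{H}^{\Xi_i}(I\otimes\mathcal{S}^{\Xi_i})=0$ for all $\Xi_i\neq\Xi^*$. Together these force $\hat\Xi=\Xi^*$, which is \eqref{eq:thm1_1}; conclusion \eqref{eq:thm1_2} then falls out of the definition \eqref{eq:NNf_2}, since $\hat I_r=I\otimes\mathcal{S}^{\hat\Xi}=I\otimes\mathcal{S}^{\Xi^*}$ and the value of this product will already have been computed while proving the positive claim. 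The bound \eqref{eq:thm1_3} is handled separately as a Lipschitz estimate.

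For the positive claim, I would apply part (i) of Proposition~1 with index $\Xi^*$: its hypothesis is precisely assumption \eqref{eq:thm1_ass2}, and since $h(\xi)\in\Xi^*$ it yields $(I_r(\xi)+I_n(\xi))\otimes\mathcal{S}^{\Xi^*}=I_r(\xi)$, i.e. $I\otimes\mathcal{S}^{\Xi^*}=I_r(\xi)=\mathcal{NN}_r(h(\xi))$. Because $h(\xi)\in\Xi^*$, the right-hand side exhibits a witness state, so the equivalence in Proposition~3 gives $\mathcal{H}^{\Xi^*}(I\otimes\mathcal{S}^{\Xi^*})=1$. This same identity establishes $\hat I_r=I\otimes\mathcal{S}^{\Xi^*}=I_r(\xi)$, i.e. \eqref{eq:thm1_2}, once the negative claim pins down $\hat\Xi=\Xi^*$.

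For the negative claim, fix $\Xi_i\neq\Xi^*$. As the partition cells meet only on their boundaries and $h(\xi)$ lies in $\Xi^*$, we have $h(\xi)\notin\Xi_i$; moreover assumption \eqref{eq:thm1_ass1} gives $I_n(\xi)\notin\{\mathcal{NN}_r(h(\xi'))\mid h(\xi')\in\Xi\}\supseteq\mathcal{I}_r^{\Xi_i}$, so part (ii) of Proposition~1 yields $I\otimes\mathcal{S}^{\Xi_i}\neq I_r(\xi)$. By Proposition~3 it remains to exclude that $I\otimes\mathcal{S}^{\Xi_i}$ equals $\mathcal{NN}_r(h(\xi'))$ for some $h(\xi')\in\Xi_i$. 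Here I would exploit the monotonicity of the spatial filter: $\mathcal{S}^{\Xi_i}=\bigotimes_{h(\xi'')\in\Xi_i}I_r(\xi'')$ is an element-wise product, so $\mathcal{S}^{\Xi_i}\le I_r(\xi')$ pixelwise while $I\otimes\mathcal{S}^{\Xi_i}\le\mathcal{S}^{\Xi_i}$; hence $I\otimes\mathcal{S}^{\Xi_i}=I_r(\xi')$ would force $I_r(\xi')=\mathcal{S}^{\Xi_i}$, i.e. a single runway image coinciding with the intersection of all runway images over $\Xi_i$. \textbf{This is the main obstacle}: one must argue that, for a nondegenerate ball of radius $\delta>0$, the generated line images genuinely shift across $\Xi_i$ so this intersection is strictly smaller than any single image (equivalently, distinct partitions generate distinct images), which is where the injectivity implicit in the Lipschitz hypothesis on $h^{-1}$ is needed; with it the supposed equality contradicts either this strict inclusion or $I\otimes\mathcal{S}^{\Xi_i}\neq I_r(\xi)$, giving $\mathcal{H}^{\Xi_i}=0$.

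Finally, for \eqref{eq:thm1_3}, I would use that both $h(\xi)$ and $h(\hat\xi)$ lie in $\hat\Xi=\Xi^*$, an infinity-norm ball of radius $\delta$, so $\|h(\xi)-h(\hat\xi)\|_\infty$ is at most its diameter $2\delta$. Writing $\xi=h^{-1}(h(\xi))$ and $\hat\xi=h^{-1}(h(\hat\xi))$ and applying the Lipschitz constant $L_h$ of $h^{-1}$ gives $\|\xi-\hat\xi\|\le L_h\|h(\xi)-h(\hat\xi)\|_\infty$, and I expect the stated constant $4L_h\delta$ to arise by conservatively bounding this diameter together with the margin by which the Hough-space discretization (the $\pm 2\delta$ terms in the ranges \eqref{eq:hough_set_1}--\eqref{eq:hough_set_2}) enlarges the set of admissible estimates. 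I would then note that this holds uniformly for every $\hat\xi\in\hat\Xi$, completing \eqref{eq:thm1_3}.
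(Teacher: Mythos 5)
Your skeleton coincides with the paper's own proof: your ``positive claim'' (Proposition 1(i) under assumption (ii), then Proposition 3) and ``negative claim'' (Proposition 1(ii) under assumption (i), then Proposition 3) are exactly the two halves of the paper's contradiction argument for \eqref{eq:thm1_1}, and \eqref{eq:thm1_2} is read off from \eqref{eq:NNf_2} in both. The real divergence is at the step you call ``the main obstacle,'' and there your proposal has a genuine gap --- though it is worth knowing that the paper does not close this step either: the paper passes from $I\otimes\mathcal{S}^{\hat\Xi}\ne I_r(\xi)$ to $\mathcal{H}^{\hat\Xi}(I\otimes\mathcal{S}^{\hat\Xi})=0$ ``by Proposition 3,'' although Proposition 3 demands that the filtered image differ from \emph{every} $\mathcal{NN}_r(h(\xi'))$ with $h(\xi')\in\hat\Xi$, not merely from the true $I_r(\xi)$. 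So you correctly located the soft spot; the problem is your repair. Your monotonicity observation (that equality would force $I_r(\xi')=\mathcal{S}^{\Xi_i}$) is fine, but the strictness you then need --- that the intersection $\bigotimes_{h(\xi'')\in\Xi_i}I_r(\xi'')$ is strictly smaller than any single image in the cell --- does not follow from the Lipschitz hypothesis on $h^{-1}$. That hypothesis lives in the continuous coordinates $\zeta$, whereas the images are binary arrays on a finite $a\times b$ grid produced after the flooring in \eqref{eq:pcf_floor}; a continuum of states maps to finitely many images, so once $\delta$ is below roughly one pixel, all states in $\Xi_i$ generate the \emph{identical} image and $\mathcal{S}^{\Xi_i}$ equals each of them. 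In that regime your intended contradiction evaporates. Closing the step requires an extra hypothesis (in effect the paper's implicit reading of assumption (i)): that the noisy image, masked by $\mathcal{S}^{\Xi_i}$ for any cell with $h(\xi)\notin\Xi_i$, never lands in $\mathcal{I}_r^{\Xi_i}$.

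Your treatment of \eqref{eq:thm1_3} also misattributes the constant. The Hough ranges \eqref{eq:hough_set_1}--\eqref{eq:hough_set_2} play no role in that bound. In the paper the factor $4$ is pure norm bookkeeping: $h(\xi)$ and $h(\hat\xi)$ both lie in an $\ell_\infty$ ball of radius $\delta$, so the triangle inequality through the center gives $\Vert h(\xi)-h(\hat\xi)\Vert_\infty\le 2\delta$; then the equivalence $\Vert v\Vert_2\le\sqrt{4}\,\Vert v\Vert_\infty$ in $\mathbb{R}^4$ gives $\Vert h(\xi)-h(\hat\xi)\Vert\le 4\delta$; finally the Lipschitz constant of $h^{-1}$ (taken with respect to the $2$-norm) yields $\Vert\xi-\hat\xi\Vert\le 4L_h\delta$. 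Your version applies $L_h$ directly to the $\ell_\infty$ distance and then gestures at the Hough discretization margin; as written it either uses a Lipschitz constant that was never defined or silently skips the norm-equivalence step, and in neither case does it produce the stated $4L_h\delta$.
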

\begin{proof}
We start by proving~\eqref{eq:thm1_1} as follows. For the sake of contradiction, we assume that there exists a partition $\Xi^\dagger \ne \hat{\Xi}$ such that which the aircraft state $\xi$ satisfies $h(\xi) \in \Xi^\dagger$. It follows from Proposition 1 and assumptions~\eqref{eq:thm1_ass1} and~\eqref{eq:thm1_ass2} that:
$$ I \otimes \mathcal{S}^{\hat{\Xi}} \ne I_r(\xi), \qquad I \otimes \mathcal{S}^{\Xi^\dagger} = I_r(\xi) $$ 
and hence Proposition 3 entails that:
$$ \mathcal{H}^{\hat{\Xi}} (I \otimes \mathcal{S}^{\hat{\Xi}}) = 0, \qquad \mathcal{H}^{\Xi^\dagger} (I \otimes \mathcal{S}^{\Xi^\dagger}) = 1 $$
Nevertheless, this contradicts the fact that:
$$ \hat{\Xi} = \arg\max\{\ldots, \mathcal{H}^{\hat{\Xi}} (I \otimes \mathcal{S}^{\hat{\Xi}}), \ldots, \mathcal{H}^{\Xi^\dagger} (I \otimes \mathcal{S}^{\Xi^\dagger}), \ldots \}$$
which proves that $h(\xi) \in \hat{\Xi}$.

Equation~\eqref{eq:thm1_2} follows directly from~\eqref{eq:thm1_1} and Proposition 1. Similarly, equation~\eqref{eq:thm1_3} follows from the fact that the partition $\hat{\Xi}$ is an infinity ball of radius $\delta$ and hence for any $\hat{\xi} \in \hat{\Xi}$:
\begin{align*}
    &\Vert h(\xi) - h(\hat{\xi}) \Vert_\infty = \Vert h(\xi) + \text{center}(\hat{\Xi}) - \text{center}(\hat{\Xi}) -  h(\hat{\xi}) \Vert_\infty
    \\
    & \qquad \le \Vert h(\xi) - \text{center}(\hat{\Xi}) \Vert_\infty + 
    \Vert \text{center}(\hat{\Xi}) -  h(\hat{\xi}) \Vert_\infty \\
    & \qquad  \le 2 \delta
\end{align*} 
Hence from the relation between the 2-norm and the infinity norm, we conclude that:
$$\Vert h(\xi) - h(\hat{\xi}) \Vert \le \sqrt{4} \Vert h(\xi) - h(\hat{\xi}) \Vert_\infty \le 4 \delta $$
from which we conclude that:
$$ \Vert \xi - \hat{\xi} \Vert \le 4 L_h \delta$$
which concludes the proof.
\end{proof}

Before we conclude this section, it is essential to interpret the assumptions~\eqref{eq:thm1_ass1} and~\eqref{eq:thm1_ass2} in Theorem~\ref{th:correctness}. In particular, the assumption in~\eqref{eq:thm1_ass1} entails that the noise $I_n$ can not be generated using the runway generative model $\mathcal{NN}_r$. In other words, this assumption ensures that the noise does not look like a runway and hence only one image of the runway exists in the scene. The assumption in~\eqref{eq:thm1_ass2} asks that the pixels that are $\delta$ close to the runway are not affected by the noise. It is crucial to note that assumption~\eqref{eq:thm1_ass2} is required to be satisfied in $\Xi^*$ only and does not affect other partitions. 




\section{Experimental Evaluation}

We present the results of a vision-based aircraft landing system that uses a target runway. 
We consider two runway segments, $L = [(L_x, 0, L_z), (L_x, 0, L_z + r_{l})]$ and $R = [(R_x, 0, R_z), (R_x, 0, R_z + r_l)]$ where $R_x=0.1$, $L_x=-0.1$, $R_z=0$, $L_z=0$, $r_{l}=0.3$ (in meters).

To generate monochromatic images, we utilized the SilkyEvCam event-based camera with a resolution of $640 \times 480$ pixels, a focal length of $8$ mm, and a pixel size of $15 \ \mu m \times 15 \ \mu m$. We measured the ground-truth states of the vehicle using Vicon motion capture cameras to track optical markers attached to the camera envelope, and the centroid of the camera was defined as the camera coordinate frame (CCF) origin. Similarly, we defined the runway target as the runway coordinate frame (RCF) from which all measurements were made. 

We partitioned the state space $\Xi \subset \mathbb{R}^4$ into 27 regions $\Xi_1, \ldots, \Xi_{27}$ using a partitioning parameter $\delta = 0.1$. These regions correspond to the range of states $[\xi_y \times \xi_z \times \xi_\theta] = [0.8,1]\times[1.6,1.8]\times[0.5,0.7]$ (we fix $\xi_x = 0$ in our experiments). We then implemented the runway generative model neural network $\mathcal{NN}_r$ for a resolution of $640 \times 480$ pixel images, the filter $\mathcal{NN}_f$, and the corresponding application of the spatial $\mathcal{S}^{\Xi_i}$ and geometric filters $\mathcal{H}^{\Xi_i}$ on all partitions to create the binary weights needed using PyTorch libraries. This process took approximately $20$ minutes per partition, resulting in a total of approximately $9$ hours to generate the neural network weights for all $27$ partitions using an Apple M1 Pro processor with 32 GB of RAM.

Next, the filter $\mathcal{NN}_F$ was used to process images collected from the SilkyEvCam event-based camera. We operated the camera for several minutes resulting in a total of $1320$ images using 25 frames per second. Figure~\ref{fig:test1} and Figure~\ref{fig:test2} show two instances of the images collected and processed during our experiments. As seen from the two figures, the scene contains one runway and several objects, and noisy pixels. The neural network $\mathcal{NN}_F$ is used to filter these images and remove all objects except for the runway. Figure~\ref{fig:test1}(right) and Figure~\ref{fig:test2}(right) show the outputs of the 4 different spatial filters $\mathcal{S}^{\Xi_i}$. As can be observed in the two figures, the result of these filters focuses the attention on specific segments of the scene. Some of these filtered images contain the runway (or segments of it) while others contain only parts of the noise image $I_n$. Next, we execute the geometric filters $\mathcal{H}^{\Xi_i}$ to identify the images that match the geometric structure of the runways. We highlight the partition with the smallest mismatch score $\mathcal{M}$ with a green box in Figure~\ref{fig:test1} and Figure~\ref{fig:test2}. In particular, in Figure~\ref{fig:test1}, the output corresponding to partition 1 contains leads to the smallest mismatch score while partition 24 corresponds to the one with the smallest mismatch score in Figure~\ref{fig:test2}.


Finally, we used off-the-shelf algorithms to process the filtered image and produce the final state estimate. For the test reported in Figure~\ref{fig:test1}, the resulting state error is 0.0777 while for the test reported in Figure~\ref{fig:test2} the resulting error is 0.045, both are below the threshold of $4\delta L_h$ and hence no further processing is required.


\begin{figure}[t!]
\centering
\includegraphics[width=0.99\columnwidth]{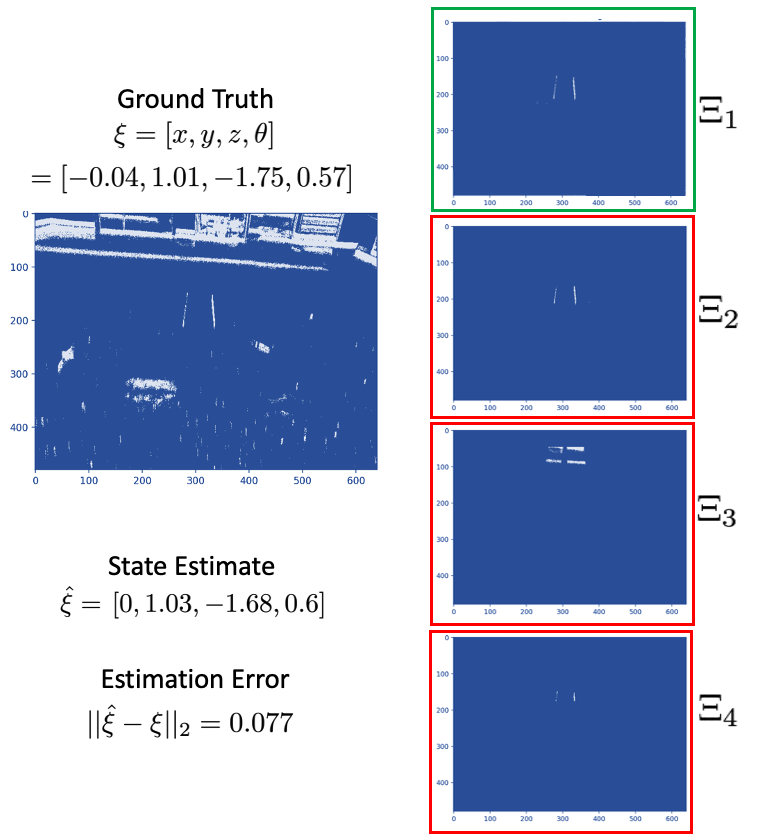}
\caption{Test 1: Framework application on image \#1 delivers correct filtered runway (in Green) found on Partition \#1.}
\label{fig:test1}
\vspace{-3mm}
\end{figure}
\begin{figure}[t!]
\centering
\includegraphics[width=0.99\columnwidth]{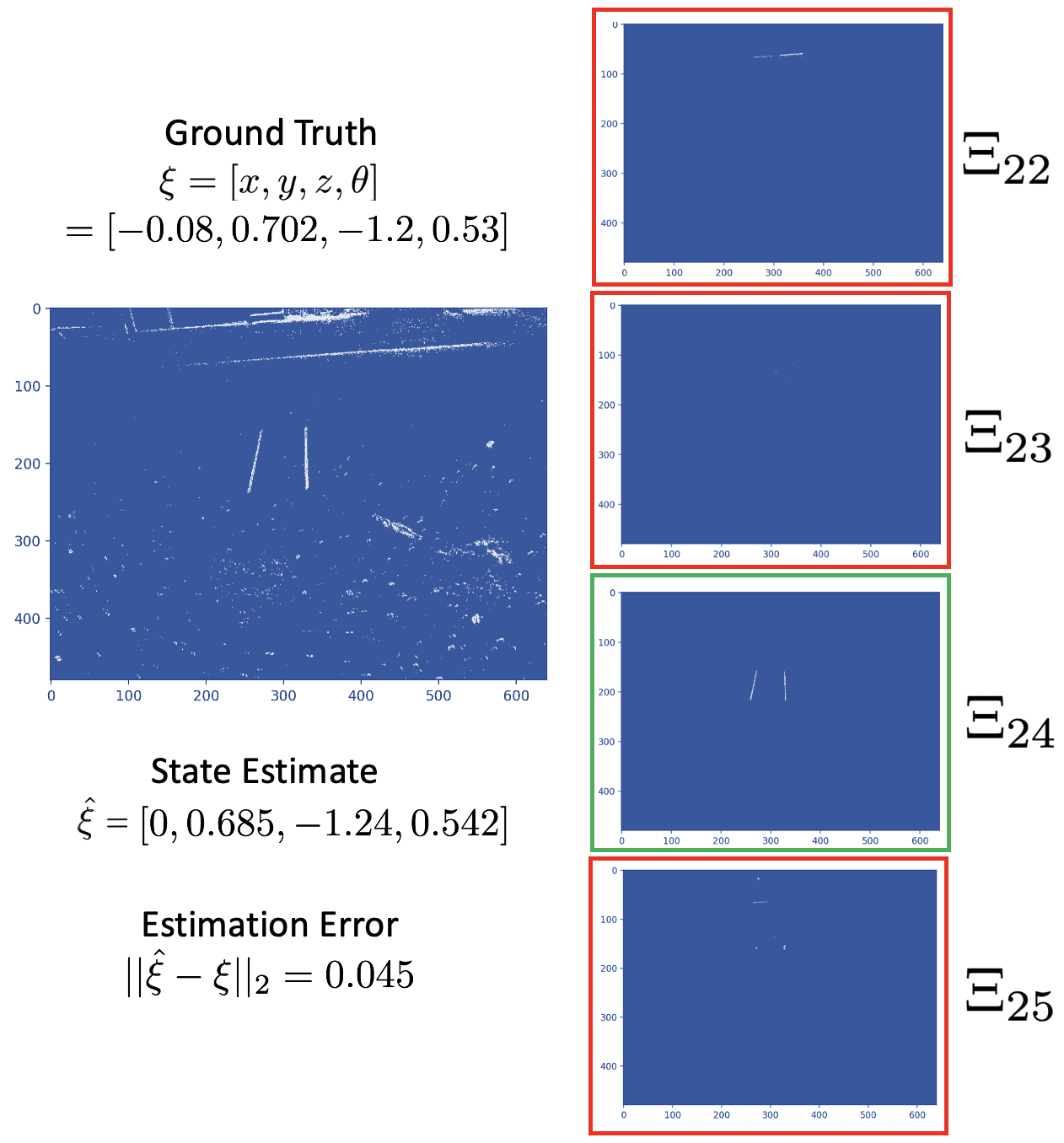}
\caption{Test 2: Framework application on image \#2 delivers correct filtered runway (in Green) found on Partition \#24.}
\label{fig:test2}
\vspace{-3mm}
\end{figure}

Additionally, for comparison purposes, we applied an off-the-shelf standard Hough transformation-based filter that can discover line segments in the scene with the aim to identify the runway without our proposed filter. Figure~\ref{fig:test3} shows the output of the standard Hough transformation-based filter when operated on the same input image used in Figure~\ref{fig:test1}. The dashed lines in Figure~\ref{fig:test3} correspond to the line segments that were detected by the standard filter. As can be appreciated from Figure~\ref{fig:test3}, the standard filter leads to several false line detections that do not match the runway due to the noise and the other objects in the scene. Fortunately, our proposed filter does not suffer from such an issue and comes with provable guarantees.


\section{Conclusion}

In this study, we introduce a novel neural network-based filter designed to process complex scenes for obtaining state estimates from images. This filter not only produces state estimates but also provides certified error-bound estimates, essentially acting as a protective shield against inaccurate estimations while preserving the accurate ones. Our approach incorporates three key components.
Firstly, we developed a physics-based generative model tailored to generate runway images, leveraging the specific physical characteristics of the camera. 
Secondly, we introduced spatial filters to ensure a consistent match with the spatial attributes of runway images across diverse states within a given partition. This is achieved through the application of mixed-monotonicity reachability analysis.
Lastly, we integrated geometric filters to evaluate whether the output aligns with the typical geometric structure of runway images, employing the robust Hough transform. These geometric filters work synergistically with the spatial filters, allowing us to zero in on distinct regions within the state space.
The theoretical validity of our network filter is demonstrated under mild assumptions. Our results demonstrate that the neural network effectively discriminates against unwanted objects, focusing solely on the runway. Additionally, geometric filters further enhance the identification process.
Comparison with a standard Hough transformation-based filter underscores the effectiveness of our proposed system. The standard filter tends to produce false detections due to noise and other objects in the scene, whereas our filter consistently yields reliable outcomes with provable guarantees.

\begin{figure}[t!]
\centering
\includegraphics[width=0.8
\columnwidth]{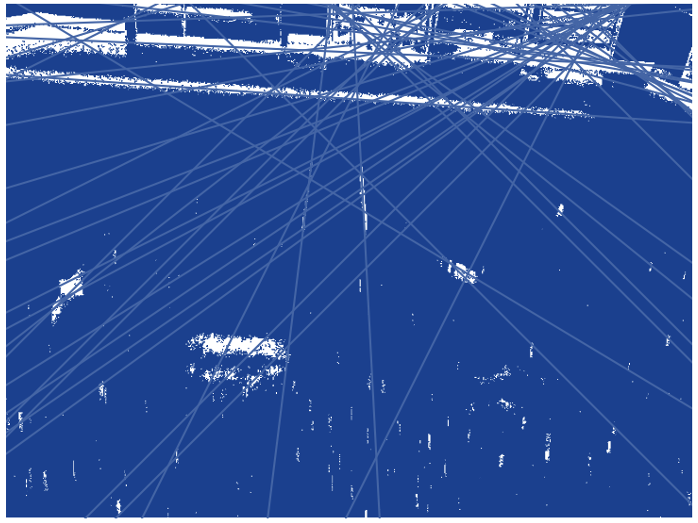}
\caption{Filtering using only Hough filter without geometrical constraints. 
}
\label{fig:test3}
\vspace{-3mm}
\end{figure}



\bibliographystyle{ieeetr}
\bibliography{bibliography}


\section{Appendix}
\label{sec:appendix}

\subsection{Physics-Based Generative Model} 
\label{appendix:gen_model}
The generative model is made of two components: A change of coordinates $h$ and a ReLU-based Neural Network for Line Generation  $\mathcal{NN}_r$ as captured by the following definitions.

\begin{definition}[Change of Coordinates]

We define the change of coordinates as:
\begin{align}
    \zeta &= h  (\xi) =
    \begin{bmatrix} 
    \zeta_1 \\~\\ \zeta_2 \\~\\ \zeta_3 \\~\\ \zeta_4 
    \end{bmatrix}
    =
     \begin{bmatrix} 
    \rho_wf\frac{L_x+\xi_x}{L_z cos(\xi_\theta)+\xi_z}+u_0 
    \\~\\
      -\rho_hf\frac{L_z sin(\theta)+\xi_y}{L_z cos(\xi_\theta)+\xi_z}+v_0 
    \\~\\
       \rho_wf\frac{(L_x + r_w) +\xi_x}{(L_z + r_L) cos(\xi_\theta)+\xi_z}+u_0
    \\~\\
       -\rho_hf\frac{(L_z + r_L) sin(\xi_\theta)+\xi_y}{ (L_z + r_L) cos(\xi_\theta)+\xi_z}+v_0
    \end{bmatrix}
\end{align}
where $f,\rho_h, \rho_w, v_0, u_0$ are the camera physical parameters as defined in Section 2. 
\end{definition}
In other words, the pair $(\zeta_1, \zeta_2)$ is the projection of the start point of the runway $(L_x,0,L_z)$ onto the Pixel Coordinate Frame $\texttt{PCF}$ (while ignoring the flooring operator for now). Similarly, the pair $(\zeta_3, \zeta_4)$ is the projection of the endpoint of the runway $(L_x+r_w,0,L_z+r_L)$ onto the $\texttt{PCF}$ frame. Indeed, we can define a similar set of variables for the other line segment of the runway, $R$. We refer to the new state space as $\Xi$.


\begin{definition}[Runway Generative Model]

We define the Runway Generative Model $\mathcal{NN}_r$ of $a \times b$ pixels as: 
\begin{equation}
I_r(\xi) = \mathcal{NN}_r(h(\xi))
\end{equation}
for simplicity let's consider $\zeta = h(\xi)$, then:
\begin{equation}
   \mathcal{NN}_r(\zeta) = ReLU(\phi_{Min}(\phi_{Abs}(\phi_{Lin}(\zeta,\phi_{Det}(\zeta)))))
\end{equation}
where:
\begin{equation}
\zeta_{det} = \phi_{Det}(\zeta)=  \zeta_1 \zeta_4 - \zeta_2 \zeta_3
\end{equation}
\begin{equation}
     \zeta^1 = \phi_{Lin}(\zeta,\zeta_{det}) = W[\zeta_1,\zeta_2,\zeta_3,\zeta_4, \zeta_{det}]^T
\end{equation}
\begin{equation}
    \zeta^2= \phi_{Abs}(\zeta^1)= \begin{bmatrix}  -|\zeta^1_{[1]}+\zeta^1_{[2]}|+|\zeta^1_{[1]}|+|\zeta^1_{[2]}|\\
    \vdots\\
    -|\zeta^1_{[15]}+\zeta^1_{[16]}|+|\zeta^1_{[15]}|+|\zeta^1_{[16]}|\\
    \end{bmatrix}
\end{equation}

\begin{equation}
    \zeta^3 = \phi_{Min}(\zeta^2) = \begin{bmatrix}  \min(\zeta^2_{[1]},\zeta^2_{[2]})\\
\vdots \\
    \min(\zeta^2_{[7]},\zeta^2_{[8]})\\
    \end{bmatrix}
\end{equation}
where $\phi_{Det}: \mathbb{R}^4 \rightarrow \mathbb{R}$, $\phi_{Lin}: \mathbb{R}^5 \rightarrow \mathbb{R}^{16\times q}$, $\phi_{Abs}:\mathbb{R}^{16\times q} \rightarrow \mathbb{R}^{8\times q}$, $\phi_{Min}:\mathbb{R}^{8\times q} \rightarrow \mathbb{R}^{4\times q}$ and $ReLU:\mathbb{R}^{4\times q} \rightarrow \mathbb{R}^{q}$, where $q=a \times b$ is the number of pixels in the image to be generated, weight matrix $W \in \mathbb{R}^{\mathfrak{16} \times \mathfrak{5}}$ contains fixed weights fully decribed by camera parameters \cite{SantaCruz2022}. We refer to $I_r(\xi)$ as the image containing solely the runway defined by $\zeta$.

\end{definition}

\subsection{Bounds on the Jacobian Matrix of the Runway Generative Model}
\label{sec:jacobian}

We compute the bound on the Jacobian matrix of the Runway Generative Model as follows:
\begin{equation}
J_{\phi_{Det}} = \begin{bmatrix} \zeta_4 \\ -\zeta_3 \\  -\zeta_2 \\ \zeta_1 \end{bmatrix}
\end{equation}

\begin{equation}
J_{\phi_{Lin}} = \begin{bmatrix} 
    W_{[1,1]} & W_{[2,1]} & \dots \\
    \vdots & \ddots & \\
    W_{[1,5]} &        & W_{[16,5]}
    \end{bmatrix}
\end{equation}

\begin{equation}
J_{\phi_{Abs}}=  \begin{bmatrix} 
    -\frac{\zeta^1_{[1]}+\zeta^1_{[2]}}{|\zeta^1_{[1]}+\zeta^1_{[2]}|} + \frac{\zeta^1_{[1]}}{|\zeta^1_{[1]}|} & \dots & 0 \\
    -\frac{\zeta^1_
    {[1]}+\zeta^1_{[2]}}{|\zeta^1_{[1]}+\zeta^1_{[2]}|} + \frac{\zeta^1_{[2]}}{|\zeta^1_{[2]}|} & \dots & 0 \\
    0 & \dots & \vdots \\
    0 & \dots & 0 \\
    \vdots & \ddots & 0 \\
    0 & \dots  & -\frac{\zeta^1_{[15]}+\zeta^1_{[16]}}{|\zeta^1_{[15]}+\zeta^1_{[16]}|} + \frac{\zeta^1_{[15]}}{|\zeta^1_{[15]}|}\\
    0 & \dots  & -\frac{\zeta^1_{[15]}+\zeta^1_{[16]}}{|\zeta^1_{[15]}+\zeta^1_{[16]}|} + \frac{\zeta^1_{[16]}}{|\zeta^1_{[16]}|}
    \end{bmatrix}    
\end{equation}
Hence, $J_{\phi_{Abs}}$ bounds are:
\begin{equation}
\underline{J}_{\phi_{Abs}} =
     \begin{bmatrix} 
    -2 & 0 & \dots \\
    -2 & 0 & \dots \\
    0 & -2 & \dots \\
    0 & -2 & \dots \\
    \vdots & \ddots & \\
    0 &        & -2\\
    0 &        & -2
    \end{bmatrix} 
\ ; \
\overline{J}_{\phi_{Abs}} =\begin{bmatrix} 
    2 & 0 & \dots \\
    2 & 0 & \dots \\
    0 & 2 & \dots \\
    0 & 2 & \dots \\
    \vdots & \ddots & \\
    0 &        & 2\\
    0 &        & 2
    \end{bmatrix}
\end{equation}

Such bounds are well defined, provided that all inputs $\zeta_1,\zeta_2,\zeta_3,\zeta_4 > 0$, which is the case for all practical applications.\\

$J_{\phi_{Min}}$ bounds are:

\begin{equation}
  \underline{J}_{\phi_{Min}} =\mathbf{0}_{8,4} \ ; \ \overline{J}_{\phi_{Min}} = \mathbf{1}_{8,4}
\end{equation}

Such bounds come from the fact that:
\begin{equation}
    \frac{\partial (Min(\zeta_i,\zeta_j))}{\partial \zeta_i} =
    \begin{cases}
      0 & \text{$\zeta_i \leq \zeta_j$}\\
      1 & \text{otherwise}\\
    \end{cases}
\end{equation}


\subsection{Geometric filter ranges $\rho$}
\label{sec:ranges}

We compute the orthogonal distance from the origin to the line made by
farthest and closest corners of the partition:

\begin{equation}
    m = \frac{\zeta^{c_i}_4-\zeta^{c_i}_2}{\zeta^{c_i}_3-\zeta^{c_i}_1}
\end{equation}

\begin{equation}
    \overline{b}_{\delta} = (\zeta_2^{c_i}+\delta)-m(\zeta_1^{c_i}+\delta)
\end{equation}

\begin{equation}
    \underline{b}_{\delta} = (\zeta_2^{c_i}-\delta)-m(\zeta_1^{c_i}-\delta)
\end{equation}



\end{document}